\documentclass{article}

\usepackage{microtype}
\usepackage{graphicx}
\usepackage{subcaption}
\usepackage{booktabs} 
\usepackage{makecell} 
\usepackage{hyperref}
\usepackage{subcaption}

\usepackage[preprint]{icml2026}

\usepackage{amsmath}
\usepackage{amssymb}
\usepackage{mathtools}
\usepackage{amsthm}
\usepackage{bbm}

\usepackage[capitalize,noabbrev]{cleveref}

\theoremstyle{plain}
\newtheorem{theorem}{Theorem}[section]
\newtheorem{proposition}[theorem]{Proposition}

\newtheorem{corollary}[theorem]{Corollary}
\theoremstyle{definition}
\newtheorem{definition}[theorem]{Definition}

\theoremstyle{remark}
\newtheorem{remark}[theorem]{Remark}

\newcommand{\Z}{\mathbb{Z}}
\newcommand{\R}{\mathbb{R}}

\newcommand{\cT}{\mathcal{T}}
\newcommand{\cX}{\mathcal{X}}
\newcommand{\cU}{\mathcal{U}}

\newcommand{\Dom}{\mathrm{Dom}}

\newcommand{\IMM}{\mathrm{IMM}}

\newcommand{\cC}{\mathcal{C}}
\newcommand{\cD}{\mathcal{D}}

\newcommand{\cL}{\mathcal{L}}

\usepackage[textsize=tiny]{todonotes}

\begin{document}

\twocolumn[
  \icmltitle{Laws of Learning Dynamics and the Core of Learners}



  \icmlsetsymbol{equal}{*}

  \begin{icmlauthorlist}
    \icmlauthor{Inkee Jung}{equal,BU}
    \icmlauthor{Siu Cheong Lau}{equal,BU}
  \end{icmlauthorlist}

  \icmlaffiliation{BU}{Department of mathematics, Boston University, Boston, USA}

  \icmlcorrespondingauthor{Inkee Jung}{inkeej@bu.edu}
  \icmlcorrespondingauthor{Siu Cheong Lau}{scllouis@bu.edu}


  \icmlkeywords{Learning Dynamics, Adversarial Attack, Deep Neural Network, Ensemble Machine Learning, Logifold}

  \vskip 0.3in
]






\printAffiliationsAndNotice{\icmlEqualContribution}

\begin{abstract}
	We formulate the fundamental laws governing learning dynamics, namely the conservation law and the decrease of total entropy. Within this framework, we introduce an entropy-based lifelong ensemble learning method. 
	We evaluate its effectiveness by constructing an immunization mechanism to defend against transfer-based adversarial attacks on the CIFAR-10 dataset. 
	Compared with a naive ensemble formed by simply averaging models specialized on clean and adversarial samples, the resulting logifold achieves higher accuracy in most test cases, with particularly large gains under strong perturbations.
\end{abstract}

\section{Introduction}
In this paper, we develop a rigorous mathematical formulation of learning process of ensembles of models, and prove two fundamental laws governing learning dynamics: the conservation law and the decrease of entropy, which bear a strong formal resemblance to the First and Third Laws of Thermodynamics. Learning dynamics is a very rich and foundational subject, see for instance \cite{Markov_decision_processes, NNLearning,Nonlinear_systems, Dynamic_programming,  loss_surfaces, saddle,SGD_Bayesian,  mean_field, bias, convergence_generalization}. We will take a concise, mathematical and practical approach.

Entropy and cross entropy play crucial roles in machine learning and data science, providing a quantitative framework for measuring uncertainty and loss in predictive modeling, see for instance \cite{Hendrycks2017, Guo2017,decomposition_uncertainty}. The concept of entropy has a long history, originating in 19th-century thermodynamics. The formulation most directly relevant to data science is Shannon entropy, introduced in his mathematical theory of communication \cite{Shannon}. 

A model $(\Phi,f)$ for the truth function $\cT$ is defined as a composition of two functions: the Euclidean representation $\Phi$ that maps a part of the full domain to an Euclidean space (called a feature space), and the decision function $f$ that maps the feature space to a subset of the truth labels. 
We study a modular AI architecture \cite{society_of_mind,neural_module} in which an ensemble is composed of heterogeneous models acting on different input modalities and output subspaces. This puts modular ensembles, Mixture-of-Experts \cite{adaptive_MOE} and multimodal learning \cite{multimodal,multimodal_Boltzmann,multimodal_survey} into a unified framework, which have the advantages of reduced catastrophic forgetting \cite{Google_nested}, improved accuracy \cite{conditional_computation,Sparsely_Gated_MOE,scaling_giant_conditional,switch_transformers,transferable_visual_language}, and finer control over training through targeted fine-tuning and model-level error attribution \cite{ST_MoE}.

The entropy of an ensemble, as formulated in this work, consists of two principal components: the total entropy over the union of the domains covered by the constituent models, and a measure of the complement of this union within the full domain of the underlying ground-truth function.
This formulation reveals two fundamental ingredients of the learning process. The first is reinforcement and refinement \cite{stat_grad_RL,RL_book,  policy_grad_RL, human_deep_RL, trust_policy_optimization, ProximalPO, soft_actor_critic} within the union of domains already covered by existing models. The second is the expansion of the knowledge domain \cite{learning_domains,domain_inv_feature,domain_adv_train,lost_domain} through the training of new models on newly observed or generated data.

We implement this theory of learning process to locate untrustworthy predictions and to develop a defense mechanism against common types of adversarial attacks \cite{AdvGoodfellow,AdvCW,APGD,auto_attack}. The idea is to construct a logifold \cite{Jung-Lau2,Jung-Lau} that consists of two layers of models (also called generations). The first layer contains two types of models: those trained on original samples and those trained on weakly perturbed samples. The second layer consists of models trained on incoming (possibly strongly) perturbed samples detected by the first layer. We use the total entropy of the first layer to identify perturbed samples which are then passed to the second layer. Beyond the static rejection or abstention mechanisms proposed in prior work \cite{CARL, Pang2022, CPR}, our system treats rejected samples as active signals to evolve into a newly immunized generation.

In this design, the first layer functions as an immunized system capable of detecting adversarial perturbations via entropy. Once a non-negligible volume of inputs is detected outside the core, the system transfers models from the first layer to a second layer, where additional training is performed using the detected adversarial samples. This hierarchical structure can be extended to multiple layers to strengthen defenses; however, in our experiments we restrict attention to a two-layer architecture.
We evaluate the proposed approach on the CIFAR-10 \cite{CIFAR-10} dataset under gradient-based attacks such as APGD and AutoAttack \cite{auto_attack, APGD}. With immunization, our logifold achieves significantly higher accuracy than models trained solely on clean data, models trained solely on adversarially perturbed data, and a naive ensemble obtained by simply averaging all models.

\section{Laws of Learning for a model}
Let 
$$\cT:X \to Y$$ 
be a continuous measurable function that represents the truth where $X$ is a topological measure space with $\mu(X) < +\infty$ and $1<|Y|<\infty$. In other words, we assume that the actual world we want to model has no fuzziness.

\begin{definition} \label{def:model}
	A model $(\Phi,f)$ for the truth function $\cT$ is a continuous measurable map $\Phi: S \to \R^n$ for a measurable subset $S \subset X$ and a function 
	$$f=(f_1,\ldots,f_t): \R^n \to \Delta(T)$$ 
	which is continuous on the image of $\Phi$, and $\Delta(T)$ is the simplex of probability distributions on a finite subset $T \subset Y$ of $t$ elements with $T \supset \cT(S)$. $\Phi$ is called to be an Euclidean representation and $f$ is called to be a decision function.
\end{definition}

\begin{definition}[Entropy of a model] \label{def:entropy}
	Let
	\begin{equation}
		H:\Delta(T) \to \R_{\geq 0}, \,\,H(y) = -\sum_{a=1}^t y_a \log_t y_a \label{eq:H(y)}
	\end{equation} 
	be the Shannon entropy function.
	The total entropy of a model $(\Phi,f)$ for $\cT$ is 
	$$ H(\Phi,f) := \int_S H(f_a(\Phi(x))) dx + \mu(X-\Dom(\Phi)). $$
\end{definition}
The second term comes from integration over $X - \Dom(\Phi)$ of the maximal entropy value which is $\sum_{a=1}^t (-\frac{1}{|Y|} \log_{|Y|} \frac{1}{|Y|}) = 1$. 

It is well-understood that the entropy quantifies the uncertainty of the model $f$. In other words,
\begin{center}
	{\bf Entropy is fuzziness.}
\end{center} 
Note that $H(\Phi,f)$ does not depend on the truth function $\cT$: we do not need to know about $\cT$ in order to compute the entropy $H(\Phi,f)$.

Next, we consider the distance between $f$ and the truth function $\cT$ which quantifies the error of the model $f$. 

\begin{definition}[Cross entropy between the truth and a model] \label{def:cross}
	Let
	$$ h: \Delta \times \Delta^\circ \to \R_{\geq 0},\,\, h(y,\gamma) = -\sum_{a=1}^d y_a \log \gamma_a $$
	be the cross entropy function, where $\Delta = \{y\in \R_{\geq 0}^d: \sum_{a=1}^d y_a = 1\}$ and $\Delta^\circ$ denotes its interior.
	The cross entropy between the truth function $\cT$ and a model $(\Phi,f)$ for $\cT$ is defined to be $H(\cT,(\Phi,f)) = \int_S h(\cT(x), f(\Phi(x))) \,dx + \mu(X-S)$, which equals
	$$ H(\cT,(\Phi,f)) := \int_S -\log_t \left(f_{\cT(x)}(\Phi(x))\right)  dx + \mu(X-S)$$
	where $f_{y}(s) := 0$ if $y \not\in T$.
\end{definition}
The second term comes from the integration over $X-S$ of the maximal cross entropy value $-\log_{|Y|} \frac{1}{|Y|} = 1$.
The first term is $+\infty$ if the measure $\mu(\{x \in S: \cT(x) \not\in T\}) > 0$. We have avoided this by putting the requirement $\cT(S) \subset T$ in Definition \ref{def:model}.

The difference between cross entropy and entropy is quantified by the following proposition. The left hand side is the total work done needed to go from $(\Phi,f)$ to $\cT$ against the gradient vector field of $H$ that measures fuzziness. Interpreting the cross entropy $H(-,(\Phi,f))$ as potential, then the right hand side is the potential difference between $\cT$ and $(\Phi,f)$. Thus, the identity is an analog of {\bf the conservation law (The First Law of Thermodynamics)}: 
\begin{center}
	{\bf Effort paid in learning the truth $\cT$ turns to potential.}
\end{center}

\begin{proposition}[Conservation Law of Learning] \label{prop:PD}
\begin{align*}
&\int_{\Dom(\Phi)} (\mathrm{grad}\, H)|_{f(\Phi(x))} \cdot (\cT(x) - f(\Phi(x))) \,dx\\
=& H(\cT,(\Phi,f)) - H(\Phi,f).
\end{align*}
\end{proposition}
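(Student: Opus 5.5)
The plan is a pointwise computation on $\Dom(\Phi)=S$, followed by integration. The two $\mu(X-S)$ terms (the maximal-entropy and maximal-cross-entropy contributions) are both equal to $\mu(X-S)$, so they cancel in $H(\cT,(\Phi,f))-H(\Phi,f)$. This leaves an identity between integrals over $S$. Throughout, the truth value $\cT(x)\in T$ is identified with the vertex $e_{\cT(x)}$ of $\Delta(T)$, which is legitimate because $\cT(S)\subset T$ by Definition \ref{def:model}. Hence $\cT(x)-f(\Phi(x))$ is a vector in $\R^t$ tangent to the simplex, with coordinates summing to zero.

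The key step is a pointwise identity valid for every $\gamma\in\Delta^\circ(T)$ and every $y\in\Delta(T)$:
\[
(\mathrm{grad}\,H)|_\gamma\cdot(y-\gamma)=h_t(y,\gamma)-H(\gamma),
\]
where $h_t(y,\gamma)=-\sum_a y_a\log_t\gamma_a$ is the cross entropy in base $t$. To verify it, differentiate $H(\gamma)=-\sum_a\gamma_a\log_t\gamma_a$ to get $\partial_a H=-\log_t\gamma_a-\tfrac{1}{\ln t}$. Then
\[
\mathrm{grad}\,H\cdot(y-\gamma)=-\sum_a(y_a-\gamma_a)\log_t\gamma_a-\tfrac{1}{\ln t}\sum_a(y_a-\gamma_a).
\]
The last sum vanishes because both $y$ and $\gamma$ lie in the simplex. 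The remaining term equals $h_t(y,\gamma)-H(\gamma)$. Note that this uses only that $y-\gamma$ is tangent to the simplex. The answer is therefore the same whether $\mathrm{grad}$ is taken in $\R^t$ or as the intrinsic gradient on the simplex, since the two differ by a multiple of $(1,\dots,1)$, which is orthogonal to tangent vectors. This is exactly $H$ being linear-plus-tangent: the identity says the affine function $h_t(\cdot,\gamma)$ is the tangent plane of the concave function $H$ at $\gamma$.

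Now substitute $\gamma=f(\Phi(x))$ and $y=e_{\cT(x)}$. Then $h_t(y,\gamma)=-\log_t f_{\cT(x)}(\Phi(x))$, which is precisely the integrand in Definition \ref{def:cross}, and $H(\gamma)$ is the integrand in Definition \ref{def:entropy}. Integrating over $S$ and adding back the cancelled $\mu(X-S)$ terms yields the proposition.

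The main obstacle is regularity at the boundary of the simplex. $\mathrm{grad}\,H$ blows up when some $\gamma_a=0$, and the cross entropy may be $+\infty$. I would handle this as follows:
\begin{itemize}
\item Assume $f$ takes values in the interior $\Delta^\circ$, as implicitly required by the domain of $h$ in Definition \ref{def:cross}. Under this assumption the pointwise identity holds everywhere.
\item Interpret both sides as integrals in $[0,+\infty]$, or under an integrability assumption. Otherwise one must split $\int(h_t-H)$ as $\int h_t-\int H$, which is valid because $0\le H\le 1$ is bounded and $\mu(S)<\infty$, so $\int_S H$ is finite.
\end{itemize}
It follows that both sides are simultaneously finite or $+\infty$, and they agree.
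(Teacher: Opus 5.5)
Your proposal is correct and follows the same route as the paper's proof: the pointwise identity $(\mathrm{grad}\,H)|_\gamma\cdot(y-\gamma)=h(y,\gamma)-H(\gamma)$, which holds because the constant part of the gradient pairs to zero with vectors tangent to the simplex, is integrated over $\Dom(\Phi)$, and the two $\mu(X-\Dom(\Phi))$ terms cancel. Your extra care about the $1/\ln t$ factor and boundary regularity goes beyond the paper. One imprecision there: the left-hand integrand $h_t-H$ can be negative (it is only bounded below by $-1$), so that integral lies in $(-\infty,+\infty]$ rather than $[0,+\infty]$, which your splitting argument with $0\le H\le 1$ already handles.
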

\begin{proof}
	See Appendix~\ref{app:proof:prop:PD}.
\end{proof}

\begin{definition}
	A learning process of a model for $\cT$ is a sequence of models $(\Phi^{(i)},f^{(i)})_{i=1}^N$ for $N \in \Z_{>1} \cup \{+\infty\}$ such that $\mathrm{Dom}(\Phi^{(i)}) \subset \mathrm{Dom}(\Phi^{(j)})$ and $H(\cT,(\Phi^{(i)},f^{(i)})) > H(\cT,(\Phi^{(j)},f^{(j)}))$ for $i<j$.
\end{definition}

The \textbf{Law of Learning Dynamics} as:
\begin{center}
	\textbf{Entropy decreases in a learning process.}
\end{center}
This has interesting similarity with the Third Law of Thermodynamics for crystallization.

\begin{theorem} \label{thm:law_single}
	Suppose that $(\Phi^{(i)},f^{(i)})_{i=1}^\infty$ is a learning process such that $\lim_{i\to\infty} H(\cT,(\Phi^{(i)},f^{(i)})) = 0$. Then 
	$$\lim_{i\to\infty} H(\Phi^{(i)},f^{(i)}) = 0.$$
	In particular, there exists a learning subprocess $(\Phi^{(i_j)},f^{(i_j)})_{j=1}^\infty$ such that the entropy decreases to zero.
\end{theorem}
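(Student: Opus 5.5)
The plan is to bound the entropy of each model from above by its cross entropy with the truth. Since entropy is nonnegative, the hypothesis $H(\cT,(\Phi^{(i)},f^{(i)}))\to 0$ then forces $H(\Phi^{(i)},f^{(i)})\to 0$. Write $S_i=\Dom(\Phi^{(i)})$ and $y=f^{(i)}(\Phi^{(i)}(x))\in\Delta(T_i)$. Both quantities carry the same boundary term $\mu(X-S_i)$, so the comparison reduces to the integrands over $S_i$. The cross-entropy hypothesis controls each piece separately:
\begin{itemize}
\item $\mu(X-S_i)\le H(\cT,(\Phi^{(i)},f^{(i)}))\to 0$;
\item $\int_{S_i}-\log_{t_i} y_{\cT(x)}\,dx\to 0$.
\end{itemize}

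\textbf{Step 1: a pointwise inequality.} For $y\in\Delta(T)$ and a label $c\in T$, set $p=y_c$ and $\ell=-\log_t p$. I will show $H(y)\le C\,\phi(\ell)$ for some function $\phi$ with $\phi(\ell)\to 0$ as $\ell\to 0$. The natural bound comes from grouping the remaining mass $1-p$:
$$H(y)\le -p\log_t p-(1-p)\log_t(1-p)+(1-p)\log_t(t-1).$$
This tends to $0$ as $p\to 1$. It is also bounded by $1$, the maximal value of $H$ noted after Definition~\ref{def:entropy}. Since $t\le |Y|$, the constants are uniform in $i$. Note that a linear bound $H(y)\le C\ell$ fails near $p=1$, because of the $-(1-p)\log(1-p)$ term. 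So I instead use the concave modulus
$$\phi(\ell)=\min\{1,\;\eta(1-t^{-\ell})+\ell\log_t(t-1)\},$$
where $\eta$ is the binary entropy and $1-p=1-t^{-\ell}\le \ell\ln t$. The function $\phi$ is continuous, vanishes at $0$, and after replacing it by its concave envelope it is concave and nondecreasing.

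\textbf{Step 2: integrate.} Let $g_i(x)=-\log_{t_i}y_{\cT(x)}\ge 0$; then $\int_{S_i}g_i\to 0$, and I must show $\int_{S_i}\phi(g_i)\to 0$. The measure is finite with $\mu(S_i)\le\mu(X)$, so this follows from Jensen's inequality applied to the concave $\phi$:
$$\int_{S_i}\phi(g_i)\le \mu(S_i)\,\phi\!\left(\frac{1}{\mu(S_i)}\int g_i\right).$$
If $\mu(S_i)$ is tiny, I use the trivial bound $\phi\le 1$ instead. Alternatively I can use Markov's inequality: split $S_i$ into $\{g_i\le\delta\}$, where the integrand is at most $\phi(\delta)\mu(X)$, and $\{g_i>\delta\}$, whose measure is at most $\delta^{-1}\int g_i$ and where the integrand is at most $1$. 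Letting $i\to\infty$ and then $\delta\to 0$ gives $\int_{S_i}H(y)\,dx\to 0$. Together with $\mu(X-S_i)\to 0$, this proves $\lim_i H(\Phi^{(i)},f^{(i)})=0$.

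\textbf{Step 3: the subprocess.} Entropies are nonnegative and tend to $0$. If some $H(\Phi^{(i)},f^{(i)})=0$, I handle it separately. Otherwise, I inductively pick $i_{j+1}>i_j$ with $H(\Phi^{(i_{j+1})},f^{(i_{j+1})})<H(\Phi^{(i_j)},f^{(i_j)})$, which is possible because the limit is $0$. Any subsequence of a learning process is still a learning process, since the domain inclusions and the strict decrease of cross entropy are inherited. This gives a strictly decreasing entropy subsequence converging to zero. If entropy hits $0$ at some index, all later entropies are also $0$ only in a degenerate sense, and then "decreases" is read as non-increasing.

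The main obstacle is Step 1. The bound on entropy by cross entropy is non-linear near the truth, so I must verify the modulus carefully and make sure it is uniform across the varying label sets $T_i$ and log bases $t_i$. The Markov splitting in Step 2 avoids any need for exact concavity.
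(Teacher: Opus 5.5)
Your proposal is correct and is essentially the paper's argument: a Markov split on $-\log_{t_i} f^{(i)}_{\cT(x)}(\Phi^{(i)}(x))$ into a small-measure set, where the integrand is bounded by $1$, and a set where the true-label probability is close to $1$ so the entropy is small; your grouping inequality $H(y)\le \eta(p)+(1-p)\log_t(t-1)$ packages into one pointwise estimate the paper's separate handling of the true-label and off-label terms (its sets $A$, $B$, $C$). Two small fixes: since $1-t^{-\ell}\le \ell\ln t$, the linear term in $\phi$ should be $\ell\ln(t-1)$ rather than $\ell\log_t(t-1)$ (or simply keep $(1-t^{-\ell})\log_t(t-1)$); and the zero-entropy case in Step 3 cannot occur for $i\ge 2$, because zero entropy with finite cross entropy forces the output to be the correct one-hot vector almost everywhere on a full-measure domain, so the cross entropy would be $0$, contradicting strict decrease at the next step.
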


\begin{proof}
	First, note that both $-\log_{t_i}\left(f^{(i)}_{\cT(x)}(\Phi^{(i)}(x))\right)$ and $-f^{(i)}(\Phi^{(i)}(x)) \log_{t_i} f^{(i)}(\Phi^{(i)}(x))$ are non-negative functions. By $\lim_{i\to\infty} H(\cT,(\Phi^{(i)},f^{(i)})) = 0$, for any $\epsilon, n>0$, we can take $i$ sufficiently large such that $\mu(X-\mathrm{Dom}(\Phi^{(i)})) < \epsilon$ and 
	$$ \int_{\mathrm{Dom}(\Phi^{(i)})} -\log_{t_i} \left(f^{(i)}_{\cT(x)}(\Phi^{(i)}(x))\right) < \frac{\epsilon}{n}$$
	so that $$\mu\left(\left\{-\log_{t_i}\left(f^{(i)}_{\cT(x)}(\Phi^{(i)}(x))\right) > \frac{1}{n}\right\}\right) < \epsilon.$$
	We need to prove that the following quantity can be made arbitrary small:
	\begin{align*}
		& \int_{\mathrm{Dom}(\Phi^{(i)})} \sum_{a=1}^{t_i} \left(-f^{(i)}_a(\Phi^{(i)}(x)) \log_{t_i} f^{(i)}_a(\Phi^{(i)}(x)) \right) dx \\
		=& \int_{\mathrm{Dom}(\Phi^{(i)})} \left(-f^{(i)}_{\cT(x)}(\Phi^{(i)}(x)) \log_{t_i} f^{(i)}_{\cT(x)}(\Phi^{(i)}(x)) \right) dx\\
		+& \int_{\mathrm{Dom}(\Phi^{(i)})} \sum_{a\not=\cT(x)} \left(-f^{(i)}_a(\Phi^{(i)}(x)) \log_{t_i} f^{(i)}_a(\Phi^{(i)}(x)) \right) dx.\\
	\end{align*}
	For the first term, we subdivide $\mathrm{Dom}(\Phi^{(i)}) = A \cup B$ into two subsets according to whether $-\log_{t_i} f^{(i)}_{\cT(x)}(\Phi^{(i)}(x)) \leq \frac{1}{n}$ or not. On $A$, the integrand $-f^{(i)}_{\cT(x)}(\Phi^{(i)}(x)) \log_{t_i} f^{(i)}_{\cT(x)}(\Phi^{(i)}(x)) \leq \frac{t_i^{-1/n}}{n} < \frac{1}{n}$, and hence the integral over $A$ is less than $\frac{\mu(X)}{n} < \epsilon$ by taking $n$ sufficiently large.  On $B$, since $\mu(B) < \epsilon$ and the integrand $\leq \frac{1}{t_i}$, the integral is less than $\epsilon$.
	
	For the second term, we note that the set $C = \{x \in \mathrm{Dom}(\Phi^{(i)}):f^{(i)}_a(\Phi^{(i)}(x)) > 1 - t_i^{-1/n} \textrm{ for some } a\not=\cT(x)\}$ is a subset of $B$ which has measure $< \epsilon$. Since the integrand $< 1$, the integral over $C$ is less than $\epsilon$. In the complement of $C$, the integrand $\leq -(t_i-1)(1 - t_i^{-1/n}) \log_{t_i} (1 - t_i^{-1/n}) \leq -(|Y|-1)(1 - |Y|^{-1/n}) \log_{2} (1 - |Y|^{-1/n})$ which can be controlled to be $<\epsilon / \mu(X)$ by taking $n$ large enough, so that the integral is less than $\epsilon$. This finishes the proof.
\end{proof}

\section{Law of Learning for an ensemble}
\begin{definition}
An ensemble is a collection of models $\cU = \{(\Phi_l,f_l): l=1,\ldots,K\}$. Its knowledge domain is the union $\Dom(\cU) = \bigcup_{l=1}^k \Dom(\Phi_l)$.
\end{definition}

\begin{definition}[Entropy for ensemble] \label{def:H(U)}
For two models $(\Phi_1,f_1)$ and $(\Phi_2,f_2)$, consider $F_i := f_i \circ \Phi_i: \Dom(\Phi_1) \cap \Dom(\Phi_2) \to \Delta(T_1 \cup T_2)$. The cross entropy over their common intersection is defined to be
$ H((\Phi_1,f_1),(\Phi_2,f_2)) := \int_{\Dom(\Phi_1) \cap \Dom(\Phi_2)} h(F_{1}(x),F_{2}(x)) \,dx. $

For an ensemble of models $\cU = \{(\Phi_l,f_l): l=1,\ldots,K\}$, 
the pointwise entropy of $\cU$ at $x \in X$ is defined as
\begin{equation} \label{eq:H_x(U)}
	H_x(\cU) = \frac{1}{K_x^2} \sum_{\substack{l: \Dom(\Phi_l)\ni x \\\\r: \Dom(\Phi_r)\ni x}} h(F_{l}(x),F_{r}(x))
\end{equation}
if $x \in \Dom(\cU)$ and $K_x>0$ is the number of models with $\Dom(\Phi_l) \ni x$; otherwise $H_x(\cU)$ is
$$\frac{1}{K^2} \sum_{l,r=1}^K \sum_{a=1}^{|Y|}-\frac{1}{|Y|}\log_{|Y|}\frac{1}{|Y|} = 1.$$
The total entropy is defined to be
\begin{equation} \label{eq:H(U)}
	H(\cU) = \int_X H_x(\cU) = \int_{\Dom(\cU)} H_x(\cU) + \mu(X-\Dom(\cU)).
\end{equation}

For two ensembles of models $\cU = \{(\Phi_l,f_l): l=1,\ldots,L\}, \mathcal{V} = \{(\Psi_r,g_r): r=1,\ldots,R\}$, the pointwise cross entropy of $(\cU,\mathcal{V})$ at $x \in X$ is defined as

\begin{equation} \label{eq:H_x(U,V)}
	H_x(\cU,\mathcal{V}) = \frac{1}{L_{x} R_{x}} \sum_{\substack{l: \Dom(\Phi_l)\ni x \\\\r: \Dom(\Psi_r)\ni x}} h((f_l\circ \Phi_{l})(x),(g_r\circ \Psi_r)(x))
\end{equation}
if $L_x >0$ and $R_x >0$ where $L_x = |\{l: \Dom(\Phi_l)\ni x\}|$ and $R_x = |\{r: \Dom(\Psi_r)\ni x\}|$; 
$$ \frac{-1}{R_x |Y|} \sum_{r: \Dom(\Psi_r)\ni x} \log_{t_r} \prod_{a=1}^{t_r} (g_{r,a}\circ \Psi_{r})(x) $$
if $L_x = 0$ and $R_x > 0$, and
$$ \frac{-1}{L_x} \sum_{l: \Dom(\Phi_l)\ni x} \sum_a F_{l,a} \log_{|Y|} \frac{1}{|Y|} = 1$$
otherwise. The total cross entropy $H(\cU,\mathcal{V})$ is the integration of $H_x(\cU,\mathcal{V})$ over $X$.

In particular, the pointwise cross entropy of the truth function $\cT$ and $\cU$ is the average cross entropy:
\begin{equation} \label{eq:H_x(T,U)}
	H_x(\cT,\cU) = \frac{1}{L_{x}} \sum_{l: \Dom(\Phi_l)\ni x} h(\cT(x),(f_l\circ \Phi_{l})(x))
\end{equation}
if $x \in \Dom(\cU)$ and $1$ otherwise.
\end{definition}

\begin{remark} \label{rmk:entropy_average}
	We can also consider the entropy $H(\bar{F})$ of the average function $\bar{F}$ of $\{(\Phi_i,f_i)\}_{i=1}^K$, which is defined by $\bar{F}(x) = \frac{1}{L}\sum_{l: \Dom(\Phi_l) \ni x} f_i(\Phi_i(x))$ if $L := |\{l: \Dom(\Phi_l) \ni x\}|>0$ and $\bar{F}_a(x) = 1/|Y|$ for all $a$ otherwise. This is different from the entropy of the ensemble $H(\cU)$ defined above and loses information about disagreements of models in the ensemble. For instance, suppose $\cU$ consists of two model functions $F_1,F_2$ with $F_1(x_1)=(0.6,0.4), F_2(x_1) = (0.6,0.4)$ at $x_1$, and $F_1(x_2)=(0.95,0.05), F_2(x_2) = (0.25,0.75)$ at $x_2$. At the two points, the average is the same, $\bar{F}(x_1) = \bar{F}(x_2) = (0.6,0.4)$, and hence $H_{x_1}(\bar{F}) = H_{x_2}(\bar{F})$. On the other hand, since $F_1$ and $F_2$ have big disagreement at $x_2$, $H_{x_2}(\cU) > H_{x_1}(\cU)$.
\end{remark}

\begin{remark}
For a broad information-theoretic analysis of predictive-uncertainty measures, including cross-entropy based formulations, we refer the reader to \cite{schweighofer2025}.
\end{remark}

$H(\cU)$ is the degree of fuzziness of the ensemble of models $\cU$. In particular,
\begin{theorem}[Vanishing of entropy]\label{thm:van_ent}
For an ensemble of models $\cU = \{(\Phi_l,f_l): l=1,\ldots,K\}$, 
$$H(\cU)=0$$ 
if and only if the models cover the whole space in measure sense, that is $X - \bigcup_{l=1}^K \Dom(\Phi_l)$ has measure zero, and there exists a function $F: X \to Y$ such that for every $l = 1,\ldots,K$, $F|_{\Dom(\Phi_l)} = f_l \circ \Phi_l$ almost everywhere. 
\end{theorem}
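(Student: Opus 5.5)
Since $H_x(\cU)\ge 0$ everywhere, $H(\cU)=0$ holds exactly when $H_x(\cU)=0$ for almost every $x\in X$. I will use the decomposition \eqref{eq:H(U)}: $H(\cU)=\int_{\Dom(\cU)}H_x(\cU)+\mu(X-\Dom(\cU))$, a sum of two non-negative terms. So $H(\cU)=0$ if and only if $\mu(X-\Dom(\cU))=0$ and $H_x(\cU)=0$ for almost every $x\in\Dom(\cU)$. The first condition is the covering statement. What remains is to show that the vanishing of the integrand on $\Dom(\cU)$ is equivalent to the existence of a common $Y$-valued function $F$.

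The key pointwise fact concerns the sum $\sum_{l,r\ni x}h(F_l(x),F_r(x))$. It is a sum of non-negative terms (all logarithms taken with base $>1$), so it vanishes iff every term does. The diagonal terms are $h(F_l(x),F_l(x))=H(F_l(x))$, the Shannon entropy up to the positive factor from the change of log base. Shannon entropy is zero exactly when $F_l(x)$ is a vertex $\delta_{y_l}$ of the simplex, i.e.\ when $F_l(x)$ is a deterministic label $y_l\in Y$. The off-diagonal term is then $h(\delta_{y_l},\delta_{y_r})=-\log (\delta_{y_r})_{y_l}$, which is $0$ if $y_l=y_r$ and $+\infty$ otherwise. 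This uses the convention $f_y:=0$ for $y\notin T$ and $-\log 0=+\infty$, extending $h$ to the boundary of the simplex as the paper implicitly does.

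Hence $H_x(\cU)=0$ iff all models defined at $x$ output the same vertex $\delta_{y(x)}$. Conversely, if all models agree on a vertex, every term is $h(\delta_y,\delta_y)=0$.

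For ($\Rightarrow$), let $N$ be the null set where either $H_x(\cU)\ne0$ or $x\notin\Dom(\cU)$. For $x\in X-N$, define $F(x)=y(x)$, the common label. On $N$, define $F$ arbitrarily, for instance by a fixed $y_0\in Y$. Then $F|_{\Dom(\Phi_l)}=f_l\circ\Phi_l$ off $N$, i.e.\ almost everywhere, where $Y$ is identified with the vertices of $\Delta$. Measurability of $F$ follows because on $\Dom(\Phi_l)-N$ it agrees with the measurable $f_l\circ\Phi_l$, and there are finitely many such pieces. For ($\Leftarrow$), the union of the finitely many exceptional null sets is null. Off this set, every $x$ in the covered region has all $F_l(x)=\delta_{F(x)}$, so $H_x(\cU)=0$. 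Together with $\mu(X-\Dom(\cU))=0$, this gives $H(\cU)=0$.

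The main obstacle is the boundary behaviour of $h$, which Definition~\ref{def:cross} only defines on $\Delta\times\Delta^\circ$. I will state the convention $h(y,\gamma)=+\infty$ whenever some $y_a>0$ has $\gamma_a=0$, and $0\log 0=0$. I also need to check that a positive-measure set where $H_x=+\infty$ forces $H(\cU)=+\infty\ne0$; with the convention this is automatic.
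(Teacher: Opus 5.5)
Your proof is correct and follows essentially the same route as the paper's. By non-negativity, $H(\cU)=0$ splits into the covering condition plus almost-everywhere vanishing of every term $h(F_l(x),F_r(x))$; the diagonal terms force one-hot outputs, the off-diagonal terms force agreement, and $F$ is then defined off a null set. The remaining differences are minor: you characterize the zero set of $H_x(\cU)$ pointwise and use a single exceptional null set, where the paper works with pairwise integrated cross entropies and defines $F$ successively on $S_1', S_2'-S_1',\ldots$, and you state explicitly the boundary convention for $h$ (needed since $h$ is only defined on $\Delta\times\Delta^\circ$) that the paper uses implicitly.
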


\begin{proof}
	See Appendix~\ref{app:proof:thm:van_ent}.
\end{proof}

When each model in the ensemble is given by a fuzzy linear logical function (which gives a mathematical interpretation of a neural network), this is called to be a \textbf{fuzzy linear logifold}, see Definition 9 of \cite{Jung-Lau}. A strict linear logifold is given by Definition 8 of \cite{Jung-Lau}. The terminology of a logifold is motivated from manifold theory in which the space is covered by charts of observers. 
For a manifold, different charts are required to agree with each other and are glued together over overlapping regions; in contrast, fuzziness and disagreement among charts are allowed for a fuzzy logifold and are measured by entropy.
By the above theorem,
\begin{corollary}
	A fuzzy linear logifold is a strict linear logifold if and only if its entropy vanishes.
\end{corollary}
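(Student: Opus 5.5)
The plan is to derive the Corollary directly from Theorem~\ref{thm:van_ent}, by matching its two conditions with the defining conditions of a strict linear logifold (Definition~8 of \cite{Jung-Lau}). We also use the observation that a fuzzy linear logifold (Definition~9 of \cite{Jung-Lau}) is an ensemble $\cU=\{(\Phi_l,f_l)\}$ in the sense of this paper. For such an ensemble, each $f_l\circ\Phi_l$ is the fuzzy linear logical function of a chart, and $\Dom(\Phi_l)$ is that chart's domain.

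First, assume the logifold is strict. Then its charts cover $X$ up to measure zero. Their logical functions are $Y$-valued, that is, they take values at vertices of $\Delta(T_l)$, and they agree almost everywhere on overlaps. On the full-measure set where all the relevant charts agree, I will glue them into a single $F:X\to Y$ by setting $F(x)=f_l(\Phi_l(x))$ for any $l$ with $x\in\Dom(\Phi_l)$, and assign an arbitrary value on the null complement. By construction $F|_{\Dom(\Phi_l)}=f_l\circ\Phi_l$ almost everywhere. The ``if'' direction of Theorem~\ref{thm:van_ent} then gives $H(\cU)=0$.

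Conversely, assume $H(\cU)=0$. The ``only if'' direction of Theorem~\ref{thm:van_ent} yields a covering of $X$ up to a null set and a function $F:X\to Y$ with $f_l\circ\Phi_l=F$ almost everywhere on each $\Dom(\Phi_l)$. This says each fuzzy linear logical function is almost everywhere a crisp $Y$-valued function, and these functions agree on overlaps. These are exactly the compatibility and covering conditions of Definition~8 of \cite{Jung-Lau}. After discarding or modifying the models on null sets, which changes neither the entropy nor the logifold up to the measure-theoretic equivalence used there, the fuzzy logifold becomes a strict one.

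I expect the main obstacle to be this last identification: checking that ``fuzzy linear logical function equal almost everywhere to a $Y$-valued function'' is precisely a strict linear logical function in the sense of \cite{Jung-Lau}. The concern is whether the definitions there are up to null sets or pointwise. My first step would be to read Definition~8 as an almost-everywhere statement, since $\mu$ is the only structure distinguishing points here. If it turns out to be pointwise, I would fall back on the fact that linear logical functions are piecewise determined by finitely many half-space conditions. Their fuzzy outputs are then continuous on open chambers, so almost-everywhere equality to a vertex of the simplex forces equality on each chamber of positive measure, and the remaining null boundary pieces can be absorbed into the covering.
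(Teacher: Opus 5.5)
Your proposal follows the paper's route. The paper proves the corollary in one line (``By the above theorem''), identifying the two conditions of Theorem~\ref{thm:van_ent}, namely covering of $X$ up to a null set and a single $F:X\to Y$ with $F|_{\Dom(\Phi_l)}=f_l\circ\Phi_l$ almost everywhere, with the defining conditions of a strict linear logifold in \cite{Jung-Lau}, exactly as you do. Your explicit gluing and your attention to the almost-everywhere versus pointwise reading go beyond what the paper writes; if you need the pointwise fallback, continuity of $f_l\circ\Phi_l$ only forces vertex values on the support of $\mu$ within $\Dom(\Phi_l)$, not on every chamber of positive measure, so the trimming of charts by null sets is what actually does the work there.
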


\begin{definition} \label{def:core}
Given $0 < p < \min\{\frac{2}{|Y|}, \log_{|Y|} \frac{3}{2}\}$, the core of an ensemble of models $\cU = \{(\Phi_l,f_l): l=1,\ldots,K\}$ in threshold $p$ is defined to be the subset
$$ C_p(\cU) = \left\{x \in X: H_x(\cU) < p \right\} \subset X $$
where $H_x(\cU)$ is the pointwise entropy in Definition \ref{def:H(U)}.
\end{definition}

\begin{proposition}[Non-fuzzy limit of the core]\label{prop:NonFuzLimCore}
	In the core $C_p(\cU)$ of an ensemble $\cU = \{(\Phi_l,f_l): l=1,\ldots,K\}$ where $0 < p < \min\{\frac{2}{|Y|}, \log_{|Y|} \frac{3}{2}\}$, for every $x\in C_p(\cU)$ and $l$ such that $\Dom(\Phi_l) \ni x$, there exists a unique $a=a_{l,x} \in T_l$ that $f_{l,a}\circ\Phi_l(x)\in [0,1]$ achieves maximum; moreover, $a_{l,x}$ are the same for all such $l$. Thus, this defines a function 
	$$F: C_p(\cU) \to Y$$ 
	which is called to be the non-fuzzy limit of the ensemble restricted to the core.
\end{proposition}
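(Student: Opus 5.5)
The plan is to turn the single inequality $H_x(\cU)<p$ into pointwise information about each vector $F_l(x)=f_l(\Phi_l(x))$, using only nonnegativity of $h$ and concavity of $\log$. Throughout I read $\log$ in $h$ as $\log_{|Y|}$, so that the constant $1$ assigned off $\Dom(\cU)$ is the maximal value.

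\emph{Step 1 (locating the core).} Outside $\Dom(\cU)$ we have $H_x(\cU)=1$. Since $p<\log_{|Y|}\tfrac32<1$, every $x\in C_p(\cU)$ lies in $\Dom(\cU)$, so $K_x\ge1$ and (\ref{eq:H_x(U)}) applies.

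\emph{Step 2 (the averaged ensemble has a dominant label).} Apply Jensen twice. First, for each pair $(l,r)$,
$$h(F_l,F_r)=-\sum_a F_{l,a}\log F_{r,a}\ \ge\ -\log\langle F_l,F_r\rangle .$$
Second, averaging over pairs and using convexity of $-\log$ gives
$$H_x(\cU)\ \ge\ -\log\Big(\tfrac1{K_x^2}\sum_{l,r}\langle F_l,F_r\rangle\Big)=-\log\|\bar F(x)\|^2,$$
where $\bar F$ is the average in Remark~\ref{rmk:entropy_average}. Hence $\|\bar F\|^2>|Y|^{-p}>\tfrac23$. Since $\|\bar F\|^2\le\max_a\bar F_a\sum_a\bar F_a=\max_a\bar F_a$, there is a unique label $a^*$ with $\bar F_{a^*}>\tfrac23$. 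This $a^*$ is the candidate value $F(x)$.

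\emph{Step 3 (passing from the average to each model).} Each $F_l$ must have its unique maximum at $a^*$. I would first try a row-wise version of Step 2: for fixed $l$, show that $\tfrac1{K_x}\sum_r h(F_l,F_r)$ is small, which would force $\langle F_l,\bar F\rangle>\tfrac23$. Combined with $\bar F_{a^*}>\tfrac23$ and $\sum_a\bar F_a=1$, this should force $F_{l,a^*}>\tfrac12$. For the row bound, the diagonal term $H(F_l)=h(F_l,F_l)$ appears in $H_x$. Together with the condition $p<2/|Y|$, I would use it to bound $H(F_l)$, and then use the elementary fact that an entropy below $2/|Y|$ rules out ties for the maximum. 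Uniqueness and agreement of $a_{l,x}=a^*$ then follow, and $F(x):=a^*$ is well defined.

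\emph{Main obstacle.} Step 3 is where I expect trouble. Each pair term carries weight only $1/K_x^2$, so $H_x<p$ bounds an individual row only by $K_xp$, not by $p$. I suspect the claim can fail as stated. Take one nearly uniform model together with many models that agree, are near-one-hot at a common label, and keep all entries bounded away from $0$. The uniform model's row contributes $O(1/K_x)$, which can be made smaller than $p$ for large $K_x$, yet its maximum is not unique. So I would first try to close the gap through the cross terms $h(F_r,F_l)$, which blow up when $F_l$ has small entries. If that fails, I would strengthen the hypothesis, for instance by scaling the threshold as $p/K_x$ or by also requiring that every pair term $h(F_l,F_r)$ be below $p$. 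Under the pairwise condition the argument of Steps 2--3 goes through verbatim: it is applied to each single pair, with the diagonal giving uniqueness and the off-diagonal terms giving agreement.
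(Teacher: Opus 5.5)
Your diagnosis is correct, and it is exactly where the paper's own proof breaks. The paper argues that, because every summand in \eqref{eq:H_x(U)} is non-negative, $H_x(\cU)<\log_{|Y|}2$ forces each individual pair term $h(F_l(x),F_r(x))<\log_{|Y|}2$. It then uses the diagonal terms to get a unique argmax and a two-component analysis of the symmetrized off-diagonal terms to get agreement. That first step drops the weight $1/K_x^2$: non-negativity only gives $h(F_l,F_r)<K_x^2p$.

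Your counterexample is valid. Concretely, take $|Y|=2$, all $T_l=Y$, constant decision functions, one model with $F_1(x)=(\tfrac12,\tfrac12)$ and nine with $F_r(x)=(0.99,0.01)$. In base $2$ the pair terms are $h(F_1,F_1)=1$, $h(F_1,F_r)\approx 3.329$, $h(F_r,F_1)=1$ and $h(F_r,F_s)\approx 0.0808$. Hence
\[
H_x(\cU)\approx\tfrac{1}{100}\bigl(1+9\cdot 3.329+9\cdot 1+81\cdot 0.0808\bigr)\approx 0.465<\tfrac12<\log_2\tfrac32 ,
\]
so $x\in C_{1/2}(\cU)$ although model $1$ has no unique maximum. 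Replacing $F_1$ by $(0.49,0.51)$ keeps $H_x(\cU)\approx 0.47$ and breaks agreement instead. So the proposition is false as stated, and the paper's argument can only support a per-pair version. Your Step~2 is valid under the original hypothesis and is the salvageable content: on $C_p(\cU)$ the averaged predictor $\bar F$ has a unique label of mass $>\tfrac23$, so $\mathrm{argmax}\,\bar F$ is a well-defined replacement for $F$.

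Your proposed repair works and takes a genuinely different route from the paper, under a weaker hypothesis. The paper's argument needs every pair term small, e.g.\ $H_x(\cU)<p/K_x^2$. Even then, the symmetrized per-label quantity $-(y_{l,a}\log y_{r,a}+y_{r,a}\log y_{l,a})$ is only bounded by $2p$, not by the thresholds $2/|Y|$ and $\log_{|Y|}\tfrac32$ that its two-component analysis invokes. Your Jensen argument needs only the row averages $\frac{1}{K_x}\sum_r h(F_l,F_r)<p$, which follow from $H_x(\cU)<p/K_x$. This gives $\langle F_l,\bar F\rangle>|Y|^{-p}>\tfrac23$ for every $l$, hence (averaging over $l$) $\bar F_{a^*}>\tfrac23$. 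Then $\langle F_l,\bar F\rangle\le F_{l,a^*}+\tfrac13(1-F_{l,a^*})$ forces $F_{l,a^*}>\tfrac12$. That yields uniqueness and agreement at once, using only $p<\log_{|Y|}\tfrac32$, with no need for $p<2/|Y|$ or any tie lemma.

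Two small slips. First, the uniqueness counterexample needs an exact tie, not a ``nearly uniform'' model. Second, your claimed fact that entropy below $2/|Y|$ excludes ties is false for $|Y|=3$, since $H(\tfrac12,\tfrac12,0)=\log_3 2<\tfrac23$. The correct threshold is $\log_t 2$, which $p<\log_{|Y|}\tfrac32$ already guarantees.
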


\begin{proof}
	See Appendix~\ref{app:proof:prop:NonFuzLimCore}.
\end{proof}

Below is an analog of Proposition \ref{prop:PD} for ensemble learning. The proof is similar and is omitted. The left hand side is the work done to go from the average of the ensemble to the truth $\cT$ against the average gradient vectors of $H$ that measures fuzziness. This turns to the potential energy defined by the cross entropy $H(-,\cU)$. 

\begin{proposition}[Conservation law of ensemble learning]
\begin{equation}
\int_{\Dom(\cU)} V(x) \cdot (\cT(x) - f(x)) \,dx = H(\cT,\cU) - H(\cU)
\end{equation}
where 
\begin{equation}
	f(x) = \frac{1}{L_x}\sum_{l: \Dom(\Phi_l)\ni x} f_l(\Phi_l(x))
\end{equation}
and
\begin{equation}
	V(x) = \frac{1}{L_x}\sum_{l: \Dom(\Phi_l)\ni x} (\mathrm{grad}\, H)|_{f_l(\Phi(x))}
\end{equation}
denote the average of models and the average of gradient vectors respectively.
\end{proposition}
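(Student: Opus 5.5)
The plan is to reduce the identity to a pointwise identity on $\Dom(\cU)$ and then integrate, exactly as one would for Proposition~\ref{prop:PD}. Fix $x\in\Dom(\cU)$ and write $F_l:=f_l\circ\Phi_l(x)$ for the $L_x$ indices $l$ with $\Dom(\Phi_l)\ni x$. View each $F_l$ as a point of $\Delta(T_l\cup\dots)\subset\Delta(Y)$ by extending by zero, and identify $\cT(x)$ with the vertex $e_{\cT(x)}$. I will use one fixed logarithm base throughout, the same one used in $h$; this convention has to be matched carefully with the definitions.

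\textbf{Gradient formula.} First compute $\mathrm{grad}\,H$. Since $H(y)=-\sum_a y_a\log y_a$, we have
\[
(\mathrm{grad}\,H)|_y=\bigl(-\log y_a-c\bigr)_a ,
\]
where $c=1/\ln(\text{base})$ is a constant. For any $u\in\Delta$ and $v\in\Delta$ we have $\sum_a(u_a-v_a)=0$, so the constant $c$ drops out of $(\mathrm{grad}\,H)|_y\cdot(u-v)$. This gives
\[
(\mathrm{grad}\,H)|_{F_l}\cdot(u-v)=h(u,F_l)-h(v,F_l),
\]
which is the key linear identity.

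\textbf{Pointwise identity.} Expand the integrand using the averages:
\[
V(x)\cdot(\cT(x)-f(x))=\frac{1}{L_x^2}\sum_{l,r}(\mathrm{grad}\,H)|_{F_l}\cdot\bigl(\cT(x)-F_r\bigr).
\]
Applying the key identity termwise gives
\[
\frac{1}{L_x^2}\sum_{l,r}\bigl(h(\cT(x),F_l)-h(F_r,F_l)\bigr)
=\frac{1}{L_x}\sum_l h(\cT(x),F_l)-\frac{1}{L_x^2}\sum_{l,r}h(F_r,F_l).
\]
The first sum is $H_x(\cT,\cU)$ by \eqref{eq:H_x(T,U)}. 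The second sum runs over all ordered pairs $(l,r)$, so it equals $H_x(\cU)$ as defined in \eqref{eq:H_x(U)} (with $K_x=L_x$). Thus pointwise on $\Dom(\cU)$,
\[
V\cdot(\cT-f)=H_x(\cT,\cU)-H_x(\cU).
\]

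\textbf{Integration.} Integrate this identity over $\Dom(\cU)$. On $X-\Dom(\cU)$ both $H_x(\cT,\cU)$ and $H_x(\cU)$ equal $1$ by definition, so their difference vanishes. Hence the integral over $\Dom(\cU)$ equals $H(\cT,\cU)-H(\cU)$.

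\textbf{Expected obstacles.} The only delicate points are bookkeeping rather than analysis. The first is consistency of logarithm bases, since $H$ uses $\log_{t}$ for each model while $h$ is stated with an unspecified $\log$. I will fix a single base, say $|Y|$, or take the base attached to the model $F_l$ in the second slot, in both $\mathrm{grad}\,H|_{F_l}$ and $h(\cdot,F_l)$, so that the key identity holds term by term. The second is ensuring the integrals are finite. Terms where some $F_{r,a}>0$ but $F_{l,a}=0$ make both sides $+\infty$. I will assume, as implicitly in Proposition~\ref{prop:PD}, that the $F_l$ lie in the interior of the relevant simplex, or else interpret the identity in the extended sense.
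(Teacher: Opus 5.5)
Your proof is correct and is exactly the argument the paper intends when it calls the proof similar to that of Proposition~\ref{prop:PD} and omits it. The pairing identity $(\mathrm{grad}\,H)|_{F_l}\cdot(u-v)=h(u,F_l)-h(v,F_l)$ for $u,v$ in the simplex, summed over ordered pairs $(l,r)$ with the $1/L_x^2$ normalization, gives $H_x(\cT,\cU)-H_x(\cU)$ pointwise on $\Dom(\cU)$, and the complement terms cancel on integration.

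One small slip in your finiteness remark: when $F_{r,a}>0=F_{l,a}$, it is $H(\cU)$ that becomes $+\infty$, so both sides of the identity equal $-\infty$, not $+\infty$.
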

 
\begin{definition}[Learning process of an ensemble] \label{def:learning_ensemble}
	A learning process of an ensemble of models for $\cT$ is a sequence of ensembles $(\cU^{(i)})_{i=1}^N$ and injective functions 
	$$\mathcal{L}:\cU^{(i)} \to \cU^{(i+1)}$$ 
	for $i=1,\ldots,N-1$ such that $\Dom(\cU^{(i)}) \subset \Dom(\cU^{(j)})$ and $H(\cT,\cU^{(i)}) > H(\cT,\cU^{(j)})$ for $i < j$, with the additional assumption that if $N=\infty$, the domain and the target of each model stabilizes: for each $i$ and each element $\Phi \in \cU^{(i)}$, there exists $k$ such that $\Dom(\mathcal{L}^j(\Phi)) = \Dom(\mathcal{L}^{j+1}(\Phi))$ and $T(\mathcal{L}^j(\Phi)) = T(\mathcal{L}^{j+1}(\Phi))$ for all $j\geq k$. The models in $\cU^{(i+1)}$ that are not in the image of $\mathcal{L}_i$ are said to be created in the $i$-th step; the models $\Phi \in \cU^{(i+1)}$ with $\Dom(\Phi) = \emptyset$ are said to be annihilated in the $i$-th step.
\end{definition}

We have the following \textbf{Law of Learning Dynamics for ensemble}:

\begin{theorem}\label{thm:LLDforEns}
Let $(\cU^{(i)})_{i=1}^\infty$ be a learning process of an ensemble of models for $\cT$ such that $\lim_{i\to\infty} H(\cT,\cU^{(i)}) = 0$. Then 
$$\lim_{i\to\infty} H(\cU^{(i)}) = 0.$$
In particular, there exists a learning subprocess $(\cU^{(i_j)})_{j=1}^\infty$ such that the entropy $H(\cU^{(i_j)})$ decreases to zero.
\end{theorem}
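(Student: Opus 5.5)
Split $H(\cU^{(i)})$ as in \eqref{eq:H(U)}, into the integral of $H_x(\cU^{(i)})$ over $\Dom(\cU^{(i)})$ plus $\mu(X-\Dom(\cU^{(i)}))$, and control each piece by $H(\cT,\cU^{(i)})$. By \eqref{eq:H_x(T,U)}, $H_x(\cT,\cU^{(i)})=1$ off $\Dom(\cU^{(i)})$, and it is a nonnegative average of cross entropies on $\Dom(\cU^{(i)})$. Hence
$$\mu(X-\Dom(\cU^{(i)}))\le H(\cT,\cU^{(i)})\to 0,$$
and
$$\int_{\Dom(\cU^{(i)})}\frac{1}{L_x}\sum_{l\ni x} -\log f^{(i)}_{l,\cT(x)}(\Phi^{(i)}_l(x))\,dx\to 0.$$
It therefore suffices to bound the integral of $H_x(\cU^{(i)})$ over the domain.

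The key is a pointwise inequality that dominates $H_x(\cU)$ by a function that is small whenever every model at $x$ puts mass near $1$ on $\cT(x)$. Write $q_l=F_l(x)_{\cT(x)}$ and $\delta_l=-\log q_l\ge 0$, so that $H_x(\cT,\cU)=\frac{1}{L_x}\sum_l\delta_l$. Each term $h(F_l(x),F_r(x))=-\sum_a F_{l,a}\log F_{r,a}$ splits into the $a=\cT(x)$ term and the remaining terms. The first equals $q_l\delta_r\le\delta_r$. Averaging over $l,r$, these contribute at most $H_x(\cT,\cU)$. The remaining terms are $-\sum_{a\ne\cT(x)}F_{l,a}\log F_{r,a}$. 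Here $\sum_{a\neq \cT(x)}F_{l,a}=1-q_l\le\delta_l$, but $-\log F_{r,a}$ may be unbounded, and this is the main obstacle: cross entropy between two models is not controlled by their closeness to the truth when $F_r$ has tiny mass on a wrong label where $F_l$ has mass.

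I would handle it as in the proof of Theorem \ref{thm:law_single}, by a truncation and measure splitting. Fix $n$ and $\epsilon$. For $i$ large, the bad set $B_i$, where some model at $x$ has $\delta_l>1/n$, has measure $<\epsilon$ by Chebyshev. On its complement, every $q_l\ge e^{-1/n}$, and the off-truth contributions are small provided the logarithms of off-truth entries stay bounded. Since the definitions of cross entropy require values in $\Delta^\circ$, I would use the stabilization hypothesis in Definition \ref{def:learning_ensemble}. Domains and targets eventually freeze, so the labels and model count $K_x$ are bounded, and I would aim to argue uniform integrability of $H_x$ on $B_i$, so that its integral there is $O(\epsilon)$.

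If uniform integrability fails in general, the fallback is to prove the theorem for the symmetrized part, where one gets $-\log F_{r,a}$ weighted by $F_{r,a}$ as in Theorem \ref{thm:law_single}. There the bound $-u\log u\to 0$ applies uniformly. I would note that this is the honest content of the argument. Finally, since $H(\cU^{(i)})\ge 0$ tends to $0$, one extracts $i_j$ inductively with $H(\cU^{(i_{j+1})})<H(\cU^{(i_j)})$, skipping indices where the entropy already vanishes. This gives a subprocess, using the same restricted maps $\mathcal L$, whose entropy decreases to zero.
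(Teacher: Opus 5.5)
Your preliminary reductions are correct: $\mu(X-\Dom(\cU^{(i)}))\le H(\cT,\cU^{(i)})$, the $a=\cT(x)$ part of $h(F_l,F_r)$ is at most $\delta_r$, and $1-q_l\le\delta_l$. You also identified exactly the right obstacle. But the step you propose for getting past it cannot work, and no argument can, because the statement is false as written.

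Write $h(F_l,F_r)=H(F_l)+D(F_l\,\|\,F_r)$, with $D$ the Kullback--Leibler divergence. The hypothesis $H(\cT,\cU^{(i)})\to 0$ controls the Shannon part; this is your fallback, the diagonal terms $l=r$ handled by Theorem~\ref{thm:law_single}. It gives no upper bound on the off-truth factors $-\log F_{r,a}$, $a\neq\cT(x)$, in the disagreement part. This is not a small-measure or uniform-integrability issue confined to $B_i$: those terms can be huge on the good set itself. Stabilization freezes domains and targets, not the values of the decision functions. It does not even bound the number of models, since new ones may be created at every step, and your Chebyshev bound on $\mu(B_i)$ silently uses such a bound.

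Concretely, let $\mu(X)=1$, $Y=\{1,2\}$, $\cT\equiv 1$, and let $\cU^{(i)}$ consist of two models with domain $X$, target $Y$, and constant decision functions
\[
F^{(i)}_1\equiv(1-\eta_i,\ \eta_i),\qquad F^{(i)}_2\equiv\bigl(1-e^{-1/\eta_i^2},\ e^{-1/\eta_i^2}\bigr),\qquad \eta_i=\tfrac{1}{i+1}.
\]
Both $-\log F^{(i)}_{l,1}$ decrease strictly to $0$. So this is a learning process (identity maps $\mathcal{L}$, constant domains and targets) with $H(\cT,\cU^{(i)})\to 0$. Yet $H(\cU^{(i)})\ge\frac14 h(F^{(i)}_1,F^{(i)}_2)\ge\frac14\,\eta_i\cdot\eta_i^{-2}\to\infty$, up to a constant from the logarithm base. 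So your fallback proves only the strictly weaker statement $\int\frac{1}{K_x}\sum_l H(F_l(x))\,dx\to 0$, not the theorem.

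The paper's own proof has the same hole. It bounds $F_{1,a}$ for $a\neq\cT(x)$ by $1-|Y|^{-1/n}$ and concludes the second term is small, ignoring the factor $-\log_{t_2}F_{2,a}$. It also discards $D_1\cap D_2-S_1\cap S_2$ merely because its measure is small, without bounding the integrand there.

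A correct version needs an extra hypothesis controlling disagreement between models. One option is to assume $\int\frac{1}{K_x^2}\sum_{l,r}D(F_l\,\|\,F_r)\,dx\to 0$ directly. Another is a uniform comparability bound $F_{r,a}\ge c\,F_{l,a}$ with $c>0$, under which your own estimates close:
\[
h(F_l,F_r)\le\delta_r+\delta_l\log(1/c)+H(F_l),
\]
hence $H_x(\cU)\le(1+\log(1/c))\,H_x(\cT,\cU)+\frac{1}{K_x}\sum_l H(F_l(x))$. Both terms integrate to quantities tending to $0$. For the second, run the argument of Theorem~\ref{thm:law_single} on the average, so that no bound on $K_x$ is needed.
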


\begin{proof}
	See Appendix~\ref{app:proof:thm:LLDforEns}.
\end{proof}




\section{Entropy-based lifelong ensemble learning}

Lifelong learning addresses situations where a learner faces a stream of learning tasks, providing an opportunity for knowledge transfer across tasks \cite{Thrun1998, overcoming_forgetting, memory_based_lifelong, lifelong_review}.

In the context of the learning process in Definition \ref{def:learning_ensemble}, the sequence of tasks is to learn the truth function on an expanding subsets of the domain over time. We consider a sequence of multiple learners where the earlier learners retain the memory of previous tasks, and new learners are created and transferred to new tasks. Earlier learners decide whether to pass a given task to the next generation by the entropy which measures uncertainty and disagreement among them. This multi-generational approach complements standard Deep Ensembles \cite{DeepEnsembles, Ovadia2019}, which can be fragile on outlier predictions. Concretely, this is determined by entropy thresholds and the core in Definition \ref{def:core}; the earlier learners are annihilated if their core becomes an empty set.

\begin{definition}[Lifelong learning]
	Let $E$ be a set whose elements are called environments, $A$ be a set whose elements are called actions, and
	$\cX_e = (\cX,\mu_e)$ where $\cX$ is a topological space called the data space with a probability measure $\mu_e$ that depends on environments $e\in E$. A learner is a tuple $(K,U,\pi)$ where $K$ is a topological space called the knowledge state space, $U: K \times \cX \to K$ is called the learning rule, and $\pi:K\times E \to A$ is called the performance policy.
	Let $P: A\times E\to\R$ that evaluates actions in different environments and $I:A \to \R$ that measures memory formation.
	
	A lifelong learning process is an infinite sequence $(e_i,x_i,k_i) \in E \times \cX \times K$, where $x_i$ are drawn according to the measure $\mu_{e_i}$ and $k_i$ is determined by $k_{i} = U(k_{i-1},x_{i-1})$ and the initial $k_0$ such that the evaluation and memory formation are non-decreasing up to a prefixed $\epsilon>0$:
	\begin{align*}
		P(\pi(k_{i+1},e_{i+1}),e_{i+1}) & > P(\pi(k_{i},e_{i+1}),e_{i+1}) - \epsilon; \\
		I(\pi(k_{i+1},e_{i+1})) &> I(\pi(k_{i},e_{i+1})) - \epsilon. 
	\end{align*}
\end{definition}

In a classification task, $\cX$ is the product $X\times Y$ of the input space and the labeling set. In practice, $\cX_e$ is replaced by its own product $\cX_e^N$ for some $N$, meaning that we take a batch of training and validation samples according to the measure $\mu_e$ rather than a single sample in each learning step. For a neural network model, $K$ is the parameter space of the model, $E$ is the set of all possible testing batches and $A$ is the set of all possible predictions (in the form of probability distributions on the labeling set) for the input testing batches.  
$P$ measures the accuracy and $I$ measures the entropy.

However, if the environment changes rapidly and drastically over time, catastrophic forgetting may happen \cite{catastrophic,van_de_Ven_2025} which crashes lifelong learning, namely the inequality for $I$ is violated. For this purpose, we consider an ensemble of learners instead to better retain memory. See also Nested Learning recently introduced in \cite{Google_nested}. 

\begin{definition}[Lifelong ensemble learning]
	\label{def:lifelong_ensemble}
	Let $E$, $A$, $\cX_e$, $P$ be the environment set, the action set, the data space and the evaluation of actions as before. Moreover, $I$ is given as a function $A^n \to \R$ that measures overall memory formation for an arbitrary number $n$ of learners.
	Given an infinite sequence of environments $e_i \in E \times \cX$ which produce samples $x_i \in \cX_{e_i}$,
	a lifelong ensemble learning process is an infinite sequence 
	$$(\cU^{(i)},\Gamma^{(i)}, (k^{(i)}_l:l \in \cU^{(i)}))_{i=1}^\infty$$ 
	where each $\cU^{(i)}$ is an ensemble of finitely many learners, $\Gamma^{(i)}: A^{|\cU^{(i)}|} \to A$ is the aggregation of individual actions (voting), with injective functions $\cL^{(i)}: \cU^{(i)}\to \cU^{(i+1)}$ where $l\in \cU^{(i)}$ and $\cL^{(i)}(l)$ are the same learner (meaning $(K,U,\pi)$ remains the same), and individual knowledge states $k^{(i)}_l\in K^{(i)}_l$ are updated according to $U^{(i)}_l$ as before. The evaluation $P$ and memory formation $I$ for the aggregated action $\Gamma^{(i)}((\pi^{(i)}_l(k_l^{(i)},e_i): l \in \cU^{(i)}))$ are required to satisfy the same inequalities as before.
\end{definition}

The idea is to use entropy thresholds set by accuracy evaluated on validation samples to decide the aggregation policy $\Gamma^{(i)}$. This suggests the following procedure for a classification task.

\begin{enumerate}
	\item We start with a subset of the domain $S^{(1)}\subset X$. Using the data $\{(x,\cT(x)): x\in S^{(1)}\}$ (which is partitioned into training and validation subsets), we train and validate an ensemble of models $\cU^{(1)}$ whose domains are subsets of $S^{(1)}$ that cover $S^{(1)}$ and whose targets are subsets of $\cT(S^{(1)})$. The entropy in Definition \ref{def:H(U)} is measured with respect to $S^{(1)}$. (At this stage, $\mu(X-S^{(1)})$ is dropped since it is unknown and cannot be measured).
	\item For a preset high accuracy threshold, we use the validation data to find a low enough entropy threshold $p$ such that on the core $C_p(\cU^{(1)})$ of Definition \ref{def:core}, the ensemble $\cU^{(1)}$ reaches the required accuracy on the validation data. We restrict the domain of $\cU^{(1)}$ to be $C_p(\cU^{(1)})$, and train the next generation of models $\cU^{(2)'}$ whose domain is restricted to be $S^{(1)} - C_p(\cU^{(1)})$. Then we take $\cU^{(2)} = \cU^{(1)} \cup \cU^{(2)'}$. Note that the entropy threshold $p$ could be zero, in which case the required accuracy threshold could not be reached and the first generation $\cU^{(1)}$ is annihilated.
	\item The above process can be repeated several times until the overall accuracy reaches a satisfactory level. If necessary, we may enlarge the models in the next generation to reach higher accuracy.
	\item Next, we enlarge the domain subset $S^{(1)}$ to $S^{(2)} \subset X$ by finding more samples from the real world or generating more samples by different means such as adding random noises, acting by symmetries, or adversarial attacks. The entropy in Definition \ref{def:H(U)} is updated over the bigger subset $S^{(2)}$ accordingly. The entropy thresholds in each layer of the ensembles are recalculated to match the required accuracy threshold. More generations of models are created as before and some earlier generations are annihilated, until the overall accuracy reaches a satisfactory level.
	\item The above procedure is repeated until the domain $X$ is saturated by training and validation samples.
\end{enumerate}

\section{Immunized systems against adversarial attacks}
\label{sec:experiments}

We empirically support the proposed learning dynamics on the CIFAR-10 dataset.
We construct a system that consists of generations of neural network models, where the domain of each generation is restricted according to the entropies of generations.  
More precisely, the domain of the first generation is restricted to its core (Definition \ref{def:core}); 
the domain of the second generation is restricted to the intersection of its core with the out-core (the complement of the core) of the first generation;
and the domain of the last generation is restricted to the complement of all restricted domains of the previous generation.  
Such an ensemble of neural network models with restricted domains is called a logifold in \cite{Jung-Lau2,Jung-Lau}. This perspective is closely related to selective classification with an abstain/reject option, where uncertain or unsafe inputs are rejected rather than force-classified \cite{CARL, SDIM, CPR}.

Distance-based detectors can be sensitive to preprocessing and other sources of mismatch \cite{Mahalanobis}. We will use \emph{ensemble entropy to detect the change of environments in lifelong learning}. Namely, when an unusual amount of inputs lie outside the core of the logifold, the system determines that the environment has changed and it initiates an adaptation procedure, which resets the entropy thresholds of the generations to reach a satisfactory accuracy for the training samples in the new environment, possibly annihilates some of the early generations if their thresholds are almost zero, and transfers models in the early generations to form a new generation to learn the samples that are out of core. 

In this paper, we use adversarial perturbations primarily as a controlled mechanism to expand the observed domain and to stress test the stability of the logifold.
First, we start with the clean environment $e_0$, where we have a distribution supported over clean samples and train an ensemble of models $\cU^{(0)}$ on these clean samples.  Then we raise a white-box adversarial attack on $\cU^{(0)}$. In the experiment to be described in the next section (see Table \ref{tab:entropy_failure}), we find that the core of $\cU^{(0)}$ can detect the attacks without the knowledge of accuracy rate. Thus, an environmental change from $e_0$ to $e_1$ is detected, and some of the models are transferred to $e_1$ and added to $\cU^{(0)}$ to form a new ensemble $\cU^{(1)}$.

We will see that the core of $\cU^{(1)}$ detects white-box strong adversarial attacks against itself even better than that of $\cU^{(0)}$. For this, we take an even stronger white-box adversarial attacks (in terms of the size of perturbations and the number of steps) against $\cU^{(1)}$. 
Compared to the same degree of attacks to $\cU^{(0)}$, the core coverage of $\cU^{(1)}$ exhibits a more significant drop and the core accuracy of $\cU^{(1)}$ is higher. We call this to be an \textbf{immunization effect}, and call $\cU^{(1)}$ to be an immunized generation whose core is more sensitive to adversarial environments. 

Then the immunization mechanism works as follows. When a non-negligible volume of incoming samples lie outside the core of  $\cU^{(1)}$, the system determines that it is under a strong adversarial environment. Then it adds the second generation $\cU^{(2)^\prime}$ consisting of models that are trained on union of the strong adversarial and original samples that lie outside the core of $\cU^{(1)}$. The final immunized system $\cU^{(2)}$ (which is also denoted as IMM) consists of both generations $\cU^{(1)}$ and $\cU^{(2)^\prime}$. In the testing/application stage, testing samples that lie in the core of $\cU^{(1)}$ are processed by $\cU^{(1)}$, and those that lie outside the core of $\cU^{(1)}$ are processed by $\cU^{(2)^\prime}$. Comparing to a simple average of all models, this mechanism gains a higher accuracy on the whole union of samples, see Table \ref{tab:acc_eval}.

Note that we are not claiming for theoretical robustness concerning worst-case guarantees over all small perturbations of data points, which requires the knowledge of almost all data points and has a trade-off with accuracy \cite{robustness_accuracy, trade_off_robustness}. In \cite{mixing_classifiers}, accurate models are mixed with robust models to  alleviate the accuracy-robustness trade-off. In our design, in addition to taking average of an ensemble, we aim at detecting effective attacks on $\cU^{(1)}$ by using the entropy of $\cU^{(1)}$, and maximizing the accuracy on the union of the original and adversarial samples by the use of entropy of the multi-generation design.

\subsection{Experimental setup}
\label{subsec:exp_setup}

We use the standard CIFAR-10 split (50{,}000 train / 10{,}000 test) and an ensemble of $K=8$ independently trained classifiers: four VGG-type~\cite{VGG} and four ResNet-type~\cite{ResNet} networks.
The baseline ensemble predictor $\bar{f}$ is the probability average.

We evaluate on clean data and on adversarial counterparts generated from the same splits, as well as their union
\[
\mathcal{D}_{\mathrm{union}} \coloneqq 
\mathcal{D}_{\mathrm{clean}} \cup \mathcal{D}_{\mathrm{weak}} \cup \mathcal{D}_{\mathrm{strong}}.
\]
All adversarial examples are $\ell_2$-bounded and clipped to $[0,1]$. 
We run untargeted $\ell_2$ APGD-CE~\cite{APGD} against $\bar f$ with configurations
\[
(\epsilon,\epsilon_{\text{step}},\text{steps},\text{restarts})
\in \{(0.5,0.2,2,4),\ (0.7,0.2,10,4)\},
\]
denoted as \texttt{APGD-weak} ($\cD_{\mathrm{weak}}$) and \texttt{APGD-strong} ($\cD_{\mathrm{strong}}$), respectively.
We also generate transfer-based adversarial examples using \texttt{AutoAttack}~\cite{auto_attack} with surrogate models from RobustBench~\cite{RobustBench}; we use the CIFAR-10 \texttt{Standard} $\ell_2$ surrogate and run AutoAttack at $\epsilon=0.5$.

We replace the integrals in Equation~\eqref{eq:H(U)} for total entropy $H$ by empirical averages over the evaluation set, and drop the complement term as in Definition~\ref{def:lifelong_ensemble}.

\subsection{Specialization, composing generations, and evaluation}
\label{subsec:specialization_and_evaluation}

In the logifold viewpoint, each model acts as a chart whose effective domain and decision boundary may differ across charts.
Rather than training a new network per chart, we reuse the penultimate feature backbone and adapt only the classification head.

\emph{Specialization.} Given a trained classifier and a dataset, we replace the final dense$+$softmax head and fine-tune with Adam~\cite{Adam} and categorical cross-entropy for 21 epochs using a stepwise learning-rate schedule.

\paragraph{Compute environment.}
Experiments were run on the BU SCC cluster using an NVIDIA L40S GPU.

\subsection{Results}
\label{subsec:results}

\paragraph{Ensemble entropy versus single-model entropy.}
We evaluate core coverage and core accuracy under white-box $\ell_2$ APGD-CE (weak/strong) at a fixed entropy threshold $\tau=0.1$.
For a single model, entropy thresholding does not produce a reliable core under adversarial perturbations: many misclassified adversarial samples still receive high confidence, yielding small $H_x$ while core accuracy remains low.

Let $\cU^{(0)}$ be the baseline ensemble (four VGG-type and four ResNet-type models trained on clean data).
Under a white-box attack on $\bar f$, the core of $\cU^{(0)}$ shrinks substantially on adversarial inputs.
This shows that ensemble entropy of Definition \ref{def:H(U)} behaves better than the entropy of the average of \(\cU^{(0)}\) treated as a single model (see Remark \ref{rmk:entropy_average}) for this purpose.

We then add four ResNet specialists trained on \texttt{APGD-weak} samples generated against $\cU^{(0)}$.
Let \texttt{SPs} denote these models and define $\cU^{(1)}=\cU^{(0)}\cup\texttt{SPs}$ ($K_1=12$), which we call the immunized generation.
Table~\ref{tab:entropy_failure} reports that $\cU^{(1)}$ yields a smaller core and higher core accuracy than $\cU^{(0)}$ and \texttt{SPs} under strong attacks.

\begin{table}[t]
	\centering
		\vspace{0.1in}
	\caption{Core coverage and core accuracy under white-box $\ell_2$ APGD-CE attacks (weak/strong configurations) at entropy threshold $\tau=0.95$, which is selected for $U^{(0)}$ to have greater than $90\%$ core coverage on clean sample.
		We report single models (ResNet/VGG), the baseline ensemble $U^{(0)}$, specialists (\texttt{SPs}), and the immunized generation $U^{(1)}$.
		Under strong attacks, single models fail to detect perturbed samples in terms of core coverage. $U^{(1)}$ yields the smallest core and the highest core accuracy among the three ensembles, which best distinguishes the effective adversarial samples among these cases.}
	\label{tab:entropy_failure}
		\vspace{0.1in}
	\setlength{\tabcolsep}{6pt}

	{\scriptsize
		\begin{tabular}{lcccc}
			\toprule
			& &Clean & \makecell{ \texttt{APGD-CE} \\Weak}& \makecell{ \texttt{APGD-CE} \\ strong}\\
			\midrule
			ResNets &\makecell{Cov. (\%) \\ Acc. (\%)}&
			\makecell[c]{ 100  \\  91.35 ($\pm$ 0.67)} &
			\makecell[c]{ 100\\  20.75($\pm$ 4.04)} &
			\makecell[c]{ 100  \\  17.43  ($\pm$ 4.07)} \\
			
			VGGs &\makecell{Cov. (\%) \\ Acc. (\%)}&
			\makecell[c]{ 100\\  91.38($\pm$ 0.82)} &
			\makecell[c]{ 100\\  52.54  ($\pm$ 3.56)} &
			\makecell[c]{100 \\ 46.97($\pm$ 4.94)} \\
			
			$U^{(0)}$&\makecell{Cov. (\%) \\ Acc. (\%)}&
			\makecell[c]{90.50\\ 97.26} &
			\makecell[c]{72.96\\ 88.50 } &
			\makecell[c]{51.07 \\ 22.62 } \\
			
			\texttt{SPs}&\makecell{Cov. (\%) \\ Acc. (\%)}&
			\makecell[c]{94.02 \\ 93.70} &
			\makecell[c]{83.77 \\ 88.70 } &
			\makecell[c]{49.48 \\ 32.32 } \\
			
			$U^{(1)}$&\makecell{Cov. (\%) \\ Acc. (\%)}&
			\makecell[c]{89.38 \\ 97.39 } &
			\makecell[c]{75.08\\ 94.21 } &
			\makecell[c]{43.06\\ 34.58 } \\
			\bottomrule
	\end{tabular}}
	
\end{table}

\paragraph{Evaluation across environments and memory formation.}
We run lifelong ensemble learning over $\cD_{\mathrm{clean}}, \cD_{\mathrm{weak}}, \cD_{\mathrm{strong}}$, and $\cD_{\mathrm{union}}$, where ``weak''/``strong'' refer to adversarial samples generated against $\cU^{(0)}$. 
For each domain, we select an entropy threshold $\tau$ on a validation set to ensure high core accuracy for $\cU^{(1)}$ (see Fig.~\ref{fig:th_sweep} in Appendix~\ref{app:graphs}).

On clean data, $\cC_\tau(\cU^{(1)})$ covers most samples, so we do not construct a second generation.
On the other domains, the out-of-core region becomes non-negligible, and we train out-of-core specialists $\cU^{(2)'}$ and form $\IMM=\cU^{(1)}\cup\cU^{(2)'}$, a two-generation logifold (IMM: \emph{Immunization Mechanism}).

The $\IMM$ aggregator is
\begin{equation}
	\Gamma^{(2)}(\IMM)\coloneqq 
	\mathbf{1}_{\cC_\tau(\cU^{(1)})}\Gamma^{(1)}(\cU^{(1)})
	+\mathbf{1}_{\cC_\tau^c(\cU^{(1)})}\Gamma^{(2)'}(\cU^{(2)'}),
\end{equation}
where $\Gamma^{(1)}$ and $\Gamma^{(2)'}$ are probability averages on their respective ensembles.
We compare overall accuracy $P$ and total entropy $H$ across $\Gamma^{(0)}$ (baseline $\cU^{(0)}$), $\Gamma^{(1)}$ (first generation $\cU^{(1)}$), and $\Gamma^{(2)}$ ($\IMM$) in Table~\ref{tab:eval_summary}.
We also report transfer-based \texttt{AutoAttack} results.

\textbf{Remark.}
In Definition~\ref{def:lifelong_ensemble}, the memory formation functional $I$ is required to be non-decreasing.
In our experiments, we use the total entropy $H$ as a proxy for $I$, but $H$ is expected to decrease as learning progresses.
Thus, when reporting $H$ we implicitly adopt a monotone reparameterization of memory formation. The distinction is a matter of directionality.

\begin{table}[t]
	\centering
	\vspace{0.1in}
	\caption{Overall accuracy and total entropy at entropy thresholds $\tau$ selected on validation data to ensure high first-generation core accuracy. $\cU^{(0)}$: 8 scratch-trained models. $\cU^{(1)}$: 12-model immunized generation. $\IMM=\cU^{(1)}\cup\cU^{(2)'}$ with four out-of-core specialists. ``All-ens'' averages all members in $\IMM$.}
	
	\label{tab:eval_summary}
		\vspace{0.1in}

	\begin{subtable}{1.0\linewidth}
		\centering
			\vspace{0.1in}
			\caption{Overall accuracy on each domain at the selected $\tau$. Except for the clean domain, where core coverage is already high, the logifold $\IMM$ performs best.}
			
		\label{tab:acc_eval}
			\vspace{0.1in}
		{\footnotesize
		\begin{tabular}{lcccc}
			\toprule
			\  Dataset & $\cU^{(0)}$ & $\cU^{(1)}$  & All-ens& $\IMM$  \\
			\midrule
			$\cD_{\mathrm{clean}}$, \scriptsize{$\tau = 1.4$}  & \textbf{94.33\%} & 94.23\%& N/A& N/A \\
			$\cD_{\mathrm{weak}}$, \scriptsize{$\tau = 0.96$}& 66.69\%& 77.32\%&81.40\%  & \textbf{85.44\%}\\
			$\cD_{\mathrm{strong}}$, \scriptsize{$\tau = 0.02$} & 13.16\%& 21.48\%&45.50\% & \textbf{90.93\%} \\
			$\cD_{\mathrm{union}}$, \scriptsize{$\tau = 0.66$}      & 58.06\%& 64.34\%& 74.62\%&\textbf{87.48\%} \\  
			\bottomrule
		\end{tabular}}
	\end{subtable}
	
	\begin{subtable}{0.98\linewidth}
		\centering
			\vspace{0.1in}
			\caption{Total entropy $H$ (Def.~\ref{def:H(U)}) on each domain at the selected $\tau$. We observe a decrease in total entropy, which is an important signal of successful lifelong ensemble learning.}
		
		\label{tab:ent_eval}
			\vspace{0.1in}
		\begin{tabular}{lcc}
			\toprule
			\  Dataset & $\cU^{(1)}$  & $\IMM$  \\
			\midrule
			$\cD_{\mathrm{weak}}, \tau = 0.96$& \makecell{$H = 0.6408 $}& \makecell{$H=0.2876 $}
			\\
			$\cD_{\mathrm{strong}}, \tau = 0.02$ & \makecell{$H =  1.9782$}& \makecell{$H=0.2054$}
\\
			$\cD_{\mathrm{union}}, \tau = 0.66$      &  \makecell{$H =  0.9638$}& \makecell{$H=0.2317$}
\\  
			\bottomrule
		\end{tabular}
\end{subtable}
\end{table}

Table~\ref{tab:acc_eval} shows that IMM improves overall accuracy on $\cD_{\mathrm{union}}$ (from $64.34\%$ with $\cU^{(1)}$ to $87.48\%$), and also yields the best performance on $\cD_{\mathrm{weak}}$ and $\cD_{\mathrm{strong}}$ among the compared aggregation rules.
Given changing environments, we observe that the inequality in lifelong ensemble learning $P(\Gamma^{(1)}) < P(\Gamma^{(2)})$ holds.
 Table~\ref{tab:ent_eval} reports a consistent reduction in total entropy from $\cU^{(1)}$ to IMM on non-clean environments.

Finally, we consider transfer-based \texttt{AutoAttack} at $\epsilon=0.5$ and evaluate $U^{(0)},U^{(1)}$ and $\IMM$ evolved on $\cD_{\mathrm{union}}$ with $\tau=0.66$.
The accuracy of $\cU^{(0)}, \cU^{(1)}, \IMM$ are $78.47\%$, $84.71\%$ and $83.58\%$ respectively. The result is comparable with $\cD_{\mathrm{weak}}$ shown in Table \ref{tab:acc_eval}, which suggests that the \texttt{AutoAttack} under consideration is slightly weaker than $\cD_{\mathrm{weak}}$ and hence $\cU^{(1)}$ attains the highest overall accuracy. On the other hand, we observe that even in this case, $\IMM$ increases core coverage from $0.4581$ ($\cU^{(1)}$) to $0.7096$ while improving high core accuracy from $95.87\%$ ($\cU^{(1)}$) to $97.04\%$.

\section{Conclusion}
We established two fundamental laws concerning work done and entropy for learning dynamics of models and ensembles. We formulated the notion of cores for ensembles, which utilizes the ensemble entropy as a label-free quantity of fuzziness and disagreement.

We constructed an immunized architecture IMM for detecting and adapting to polluted environments by adversarial perturbations. IMM consists of an immunized generation consisting of both models trained on clean data and models transferred to weakly perturbed data to detect effective adversarial samples, and a specialized generation that consists of models transferred to out-of-core samples. On CIFAR-10 under APGD-CE and surrogate-based \texttt{AutoAttack}, the resulting system IMM improves accuracy on mixed domains and yields particularly large gains under strong perturbations compared to naive averaging baselines.

More broadly, our approach suggests a principled way to grow modular ensembles over expanding domains by (i) identifying the core (trustworthy regions) via low ensemble entropy, (ii) transferring models to the complement of the core, and (iii) iterating this process to enlarge the core.

\section*{Impact Statement}

This paper presents work whose goal is to advance the field of Machine
Learning. There are many potential societal consequences of our work, none
which we feel must be specifically highlighted here.


\bibliographystyle{icml2026}
\bibliography{bib}
\newpage
\appendix
\onecolumn

\section{Proofs}

\subsection{Proof of Proposition~\ref{prop:PD}}
\label{app:proof:prop:PD}
\begin{proof}
	$$-(\mathrm{grad}\, H)|_{y} \cdot y' = -h(y',y) + \sum_{a} y'_a $$
	by direct calculation. for $y'$ being tangent to $\Delta$, the second term vanishes. Thus the integrand on the left hand side becomes
	$$ h(\cT(x)-F(x),F(x)) = h(\cT(x),F(x)) - H(F(x))$$
	where $F = f \circ \Phi$.
	Integration over $\Dom(\Phi)$ gives the equality.
\end{proof}

\subsection{Proof of Theorem~\ref{thm:van_ent}}
\label{app:proof:thm:van_ent}
\begin{proof}
	
	Note that each term of $H(\cU)$ in Definition \ref{def:H(U)} is non-negative. Thus, $H(\cU)=0$ if and only if $\mu(X - \bigcup_{l=1}^K \Dom(\Phi_l))=0$ and $H((\Phi_l,f_l),(\Phi_r,f_r))=0$ for every $l,r$. $H((\Phi_l,f_l),(\Phi_r,f_r))$ is an integral of a non-negative function. By monotone convergence theorem in measure theory, the integral is zero if and only if the non-negative functions $-f_{l,a} (\Phi_l(x)) \log_t f_{r,a} (\Phi_r(x)) = 0$ almost everywhere. For $l=r$, this happens if and only if $f_{l,a}(\Phi_l(x)) \in \{0,1\}$ for $x \in S_l'$ where $\mu(\Dom(\Phi_l) - S_l')=0$; since the image of $f_l$ lies in the simplex $\Delta(T_l)$, $f_{l,a}(\Phi_l(x)) = 1$ for exactly one value of $a \in T_l$ and $0$ otherwise; hence it gives a function $F_l$ on $S_l'$. For $l\not=r$, this happens if and only if $f_{l,a}(\Phi_l(x)) = f_{r,a}(\Phi_r(x))$ for $x \in S_{lr}'$ where $\mu(\Dom(\Phi_l) \cap \Dom(\Phi_r) - S')=0$. In this situation, for each $x \in X$, we define $F(x) = F_1(x)$ if $x \in S_1'$, $F(x) = F_2(x)$ if $x \in S_2'-S_1'$, $F(x) = F_3(x)$ if $x \in S_3'-S_2'-S_1'$ and so on; $F(x)$ is assigned to be a fixed element in $Y$ if $x \in X-\bigcup_i S_i'$ which is a measure zero set. By construction, $F|_{\Dom(\Phi_l)} = f_l \circ \Phi_l$ almost everywhere for every $l = 1,\ldots,K$. 
\end{proof}

\subsection{Proof of Proposition~\ref{prop:NonFuzLimCore}}
\label{app:proof:prop:NonFuzLimCore}
\begin{proof}
	
	First, we have $C_p(\cU) \subset \bigcup_l \Dom(\Phi_l)$ since $H_x(\cU) < \log_{|Y|} 2 < K^2$ for all $x\in C_p(\cU)$. Moreover, since every term in \eqref{eq:H_x(U)} is non-negative, $H_x(\cU) < \log_{|Y|} 2$ implies that every term $H_x((\Phi_l,f_l),(\Phi_r,f_r)) = \sum_{a=1}^t -F_{l,a}(x) \,\log_t F_{r,a}(x) < \log_{|Y|} 2$ where $F_i = f_i \circ \Phi_i$ and $t = |T_l \cup T_r|$. Let's consider the terms for $l=r$. The subset $\{y \in \Delta(T_l): \sum_{a=1}^t -y_a \,\log_t y_a < \log_{|Y|} 2\}$ has $t$ connected components; for all $y$ lying in the $a$-th component, $y_a$ is the unique maximum among the entries of $y$. This proves the first statement for the uniqueness of $a \in T_l$. Now consider $y_l = F_{l}(x)$ and $y_r = F_{r}(x)$ for $x\in C_p(\cU)$ and $\Dom(\Phi_l) \cap \Dom(\Phi_r) \ni x$.  For every $a \in T$, it needs to satisfy the inequalities 
	\begin{align*}
		-(y_{l,a} \,\log_t y_{r,a} + y_{r,a} \,\log_t y_{l,a}) < 2/|Y| <& 2/t; \\
		-(y_{l,a} \,\log_t y_{r,a} + y_{r,a} \,\log_t y_{l,a}) < \log_{|Y|} \frac{3}{2} <& \log_t \frac{3}{2}
	\end{align*}
	where the second inequality reduces to
	$$-(y_{l,a} \,\log y_{r,a} + y_{r,a} \,\log y_{l,a})< \log \frac{3}{2}.$$ 
	For each $a$, the subset of all $(y_{l,a},y_{r,a})\in [0,1]\times [0,1]$ that satisfies the inequalities consists of two components that are subsets of $\{\textrm{both } y_{l,a} \textrm{ and } y_{r,a} < \frac{1}{|Y|}\}$ (using the first inequality) and $\{\textrm{both } y_{l,a} \textrm{ and } y_{r,a} > \frac{1}{2}\}$ (using the second inequality) respectively. For those indices $a$ that are not argmax of $y_l$, we have $y_{l,a} < 1/2$ which implies $y_{r,a} < \frac{1}{|Y|} < \frac{1}{|T_r|}$. Let  $a_0$ be the argmax of $y_l$. Then $y_{r,a_0} = 1 - \sum_{a\not=a_0} y_{r,a} > \frac{1}{|T_r|}>\frac{1}{|Y|}$, and hence $(y_{l,a_0},y_{r,a_0})$ lies in the second component which implies $a_0$ is also the argmax of $y_r$. Thus $y_l$ and $y_r$ have the same argmax $a_0$ Thus models give the same response (after taking argmax) in the intersections, and hence we have a well-defined function $F:C_p(\cU)\to Y$.
\end{proof}

\subsection{Proof of Theorem~\ref{thm:LLDforEns}}
\label{app:proof:thm:LLDforEns}
\begin{proof}
	
	By \eqref{eq:H_x(T,U)} and Theorem \ref{thm:law_single}, the entropy of each model in an ensemble that survives in the limit has entropy limiting to zero. Thus, it suffices to consider the cross entropy of any two different models in an ensemble. Since each domain and target stabilize, we may assume that their domains and targets are independent of the step $i$. Let's denote their domains by $D_l \subset X$, targets by $T_l \subset Y$ and the model functions by $F^{(i)}_l= (F^{(i)}_{l,a}:a\in T_l)$ respectively for $l=1,2$. Take any $\epsilon > 0$and $n>0$. For $i$ large enough, as in the proof of Theorem \ref{thm:law_single}, we have a subset $S_l \subset D_l$ with $\mu(D_l - S_l) < \epsilon$ such that for all $x \in S$,
	$$ -\log_{t_l} F_{l,\cT(x)}^{(i)}(x) < \frac{1}{n}, $$
	that is, $F_{l,\cT(x)}^{(i)}(x) > 1/(t_l)^{1/n}$ (meaning that it can be made arbitrarily close to $1$). The pointwise cross entropy between the two is
	$$ -F_{1,\cT(x)}^{(i)}(x) \log_{t_2} F_{2,\cT(x)}^{(i)}(x) - \sum_{a\not=\cT} F_{1,a}^{(i)}(x) \log_{t_2} F_{2,a}^{(i)}(x). $$
	Let $x \in S_1 \cap S_2$. For the first term, $-\log_{t_2} F_{2,\cT(x)}^{(i)}(x) < 1/n$ and $F_{1,\cT(x)}^{(i)}(x) < 1$, implying that the first term $<1/n$. For each summand of the second term, $F_{l,a}^{(i)}(x) < 1-1/(t_l)^{1/n} < 1-1/|Y|^{1/n} < \epsilon / |Y|$ by taking $n$ large. Thus, the pointwise entropy for $x \in S_1 \cap S_2$ can be made arbitrarily small. Moreover, $\mu(D_1 \cap D_2 - S_1 \cap S_2) < \epsilon$. Thus, the cross entropy between any two models in $\cU^{(i)}$ can be made arbitrarily small for $i$ sufficiently large. Thus, $H(\cU^{(i)})$ defined by \eqref{eq:H(U)} can be made arbitrarily small.
\end{proof}
\clearpage
\section{Graphs of Core Coverage and Accuracy under Entropy thresholding.}
\label{app:graphs}
\begin{figure}[!htbp]
	\centering
	
	\begin{subfigure}[t]{0.49\linewidth}
		\centering
		
		\includegraphics[width=\linewidth]{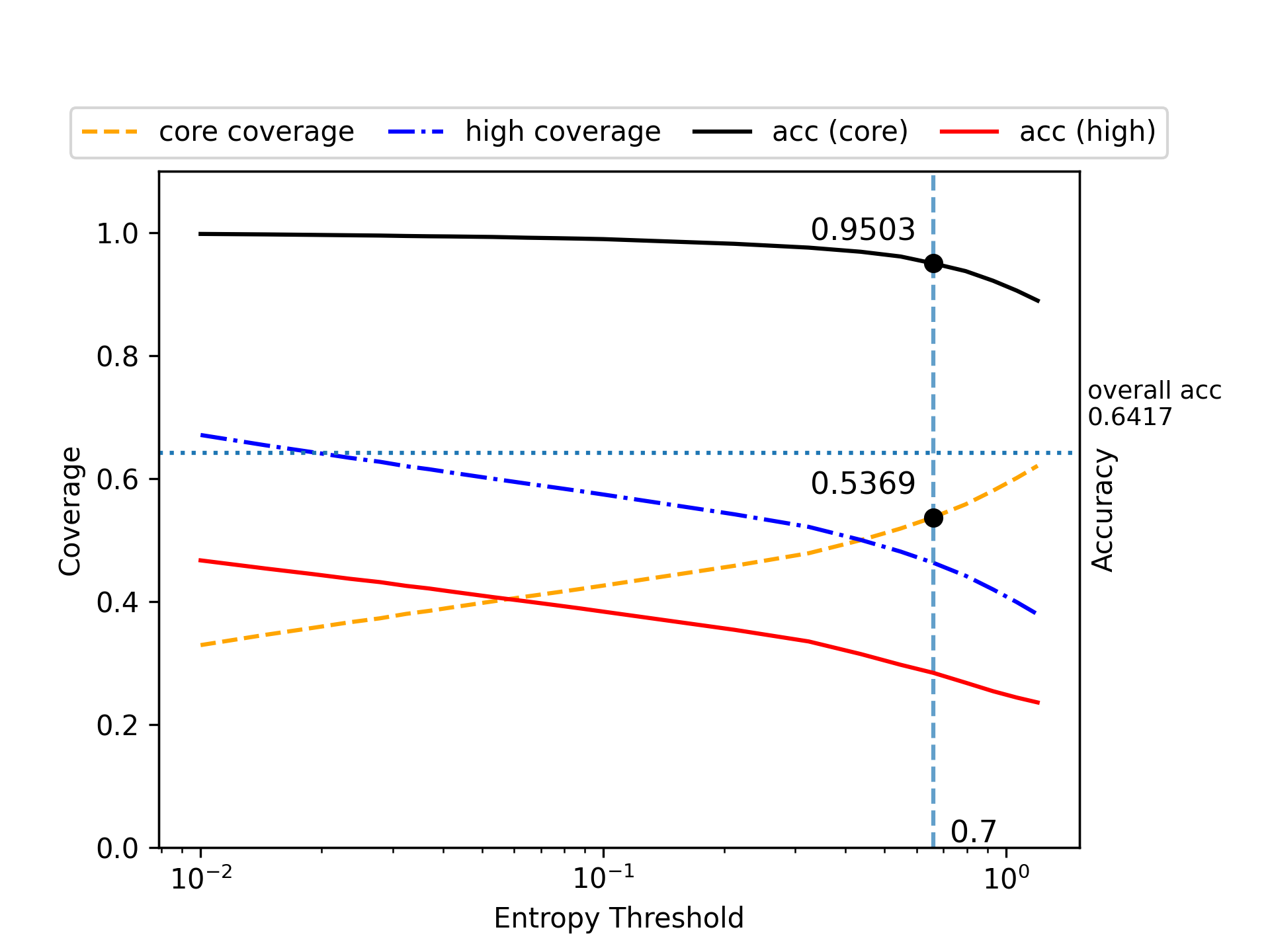}
		\vspace{0.1in}
		\caption{$\mathcal{D}_{\mathrm{union, val}}$. At $\tau  = 0.7$ the core accuracy is greater than $95\%$.}
		\label{fig:th_sweep_union}
		\vspace{0.1in}
	\end{subfigure}\hfill
	\begin{subfigure}[t]{0.49\linewidth}
		\centering
		\includegraphics[width=\linewidth]{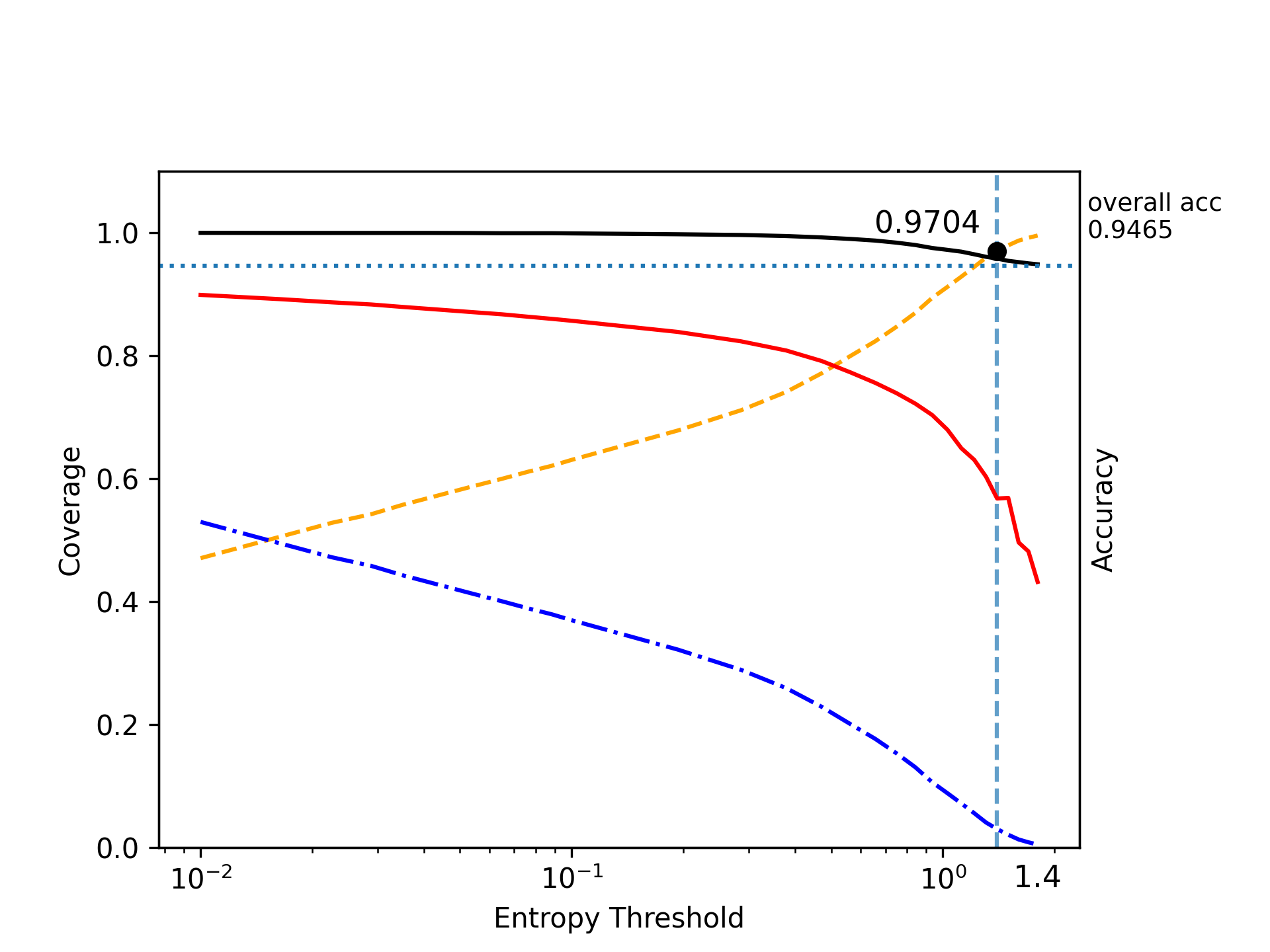}
		\vspace{0.1in}
		\caption{$\mathcal{D}_{\mathrm{clean, val}}$. At $\tau = 1.4$, the core coverage is 0.9704.}
		\label{fig:th_sweep_clean}
		\vspace{0.1in}
	\end{subfigure}
	
	\begin{subfigure}[t]{0.32\linewidth}
		\centering
		\includegraphics[width=\linewidth]{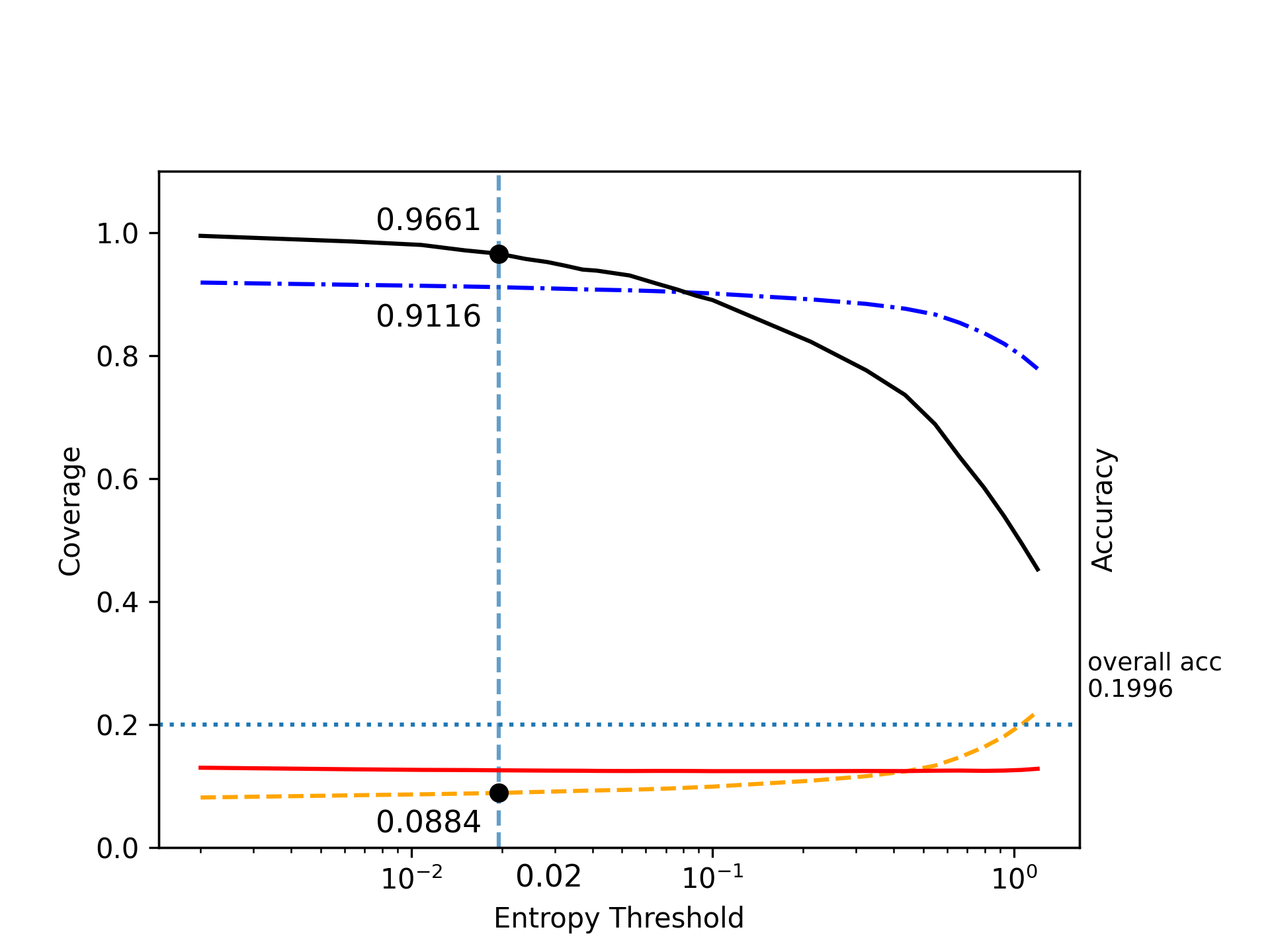}
		\vspace{0.1in}
		\caption{$\mathcal{D}_{\mathrm{strong, val}}, \tau = 0.02$}
		\vspace{0.1in}
		\label{fig:th_sweep_APGD_strong}
	\end{subfigure}\hfill
	\begin{subfigure}[t]{0.32\linewidth}
		\centering
		\includegraphics[width=\linewidth]{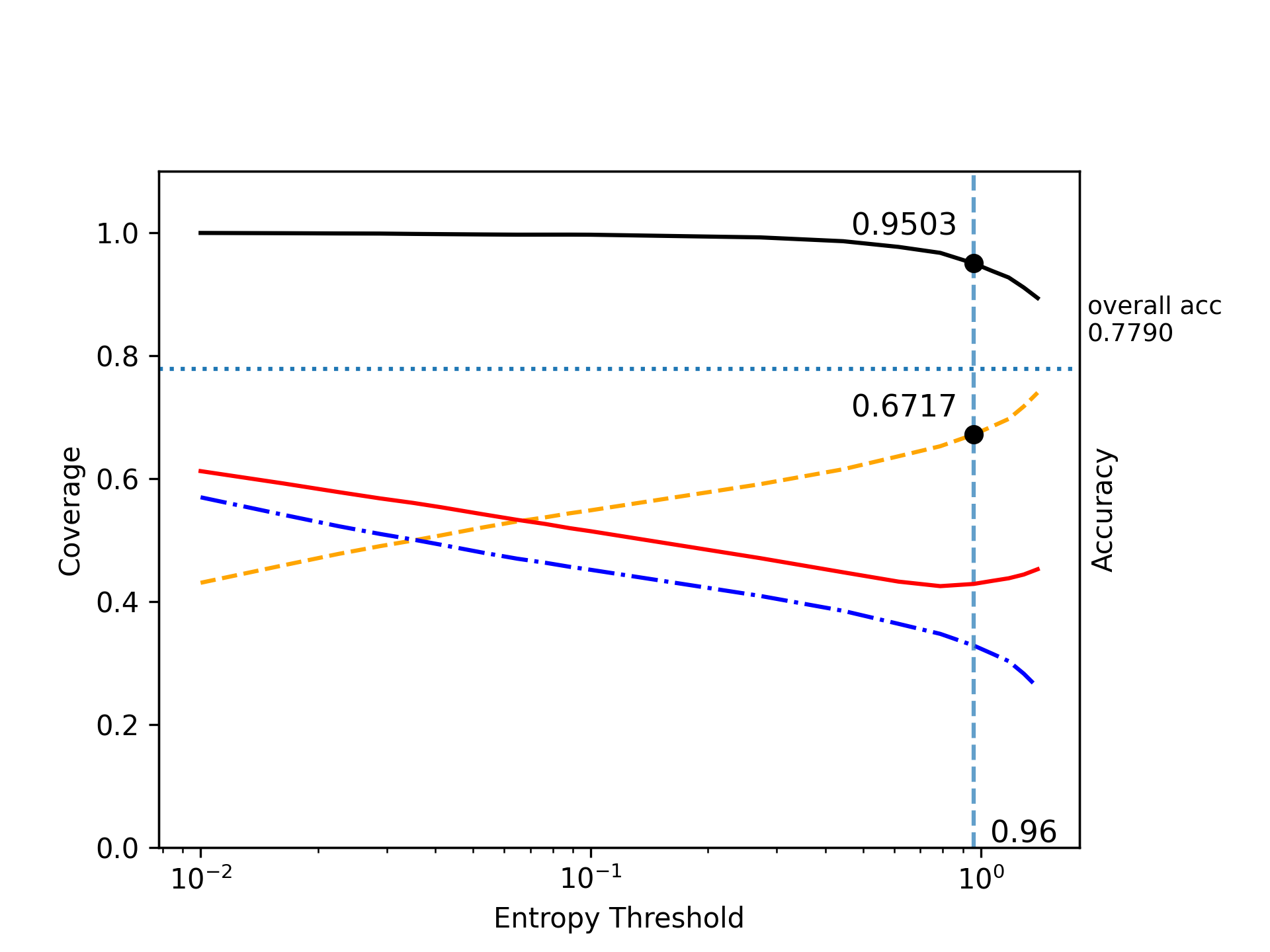}
		\vspace{0.1in}
		\caption{$\mathcal{D}_{\mathrm{weak}}, \tau = 0.96$}
		\label{fig:th_sweep_APGD_weak}
		\vspace{0.1in}
	\end{subfigure}\hfill
	\begin{subfigure}[t]{0.32\linewidth}
		\centering
		\includegraphics[width=\linewidth]{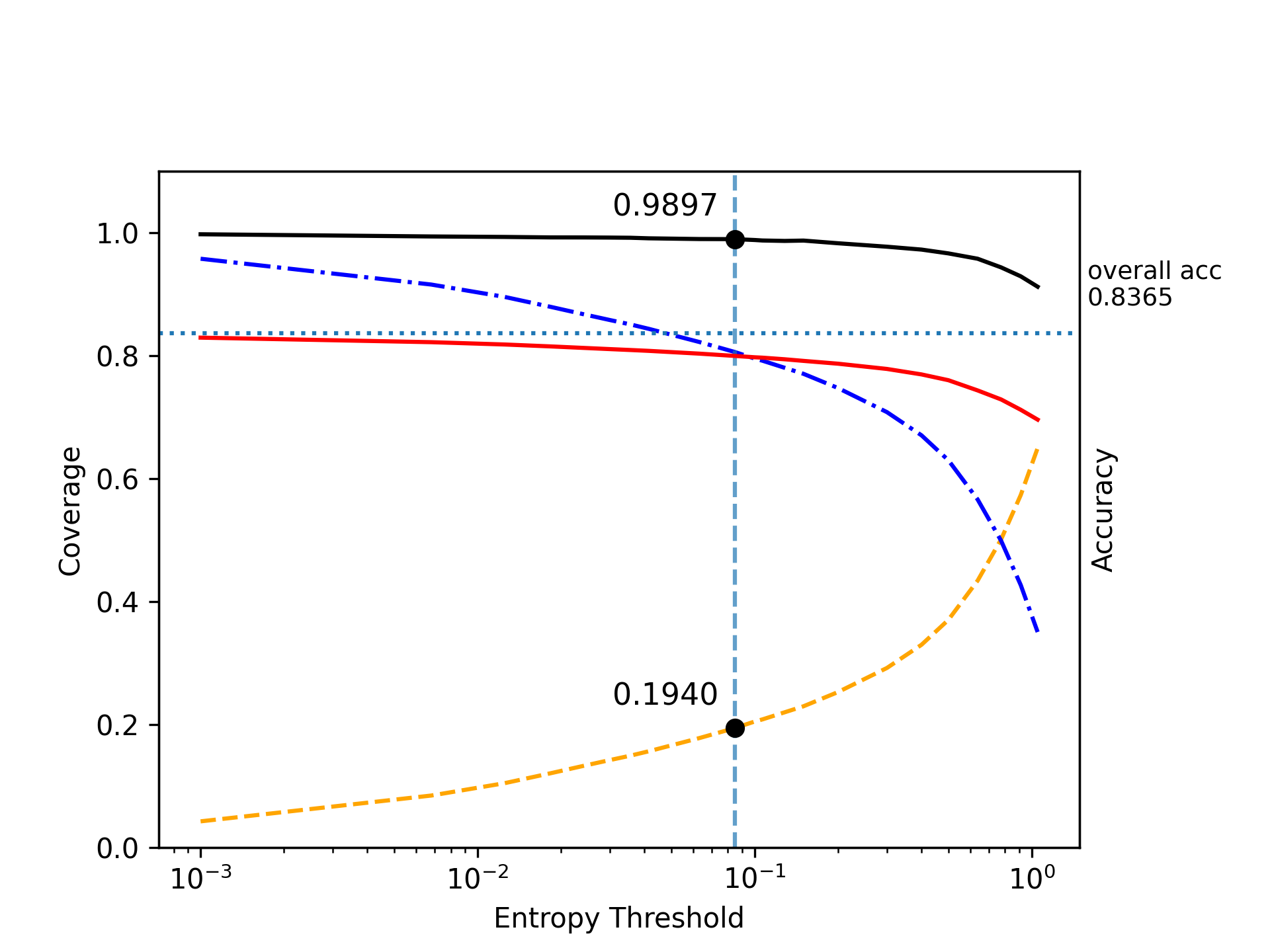}
		\vspace{0.1in}
		\caption{\texttt{AutoAttack} ($\epsilon=0.5$). The vertical line is at $\tau = 0.1$. It shows that the immunized generation detects transfer-based adversarial samples reliably.}
		\label{fig:th_sweep_aa_std}
		\vspace{0.1in}
	\end{subfigure}
	\vspace{0.1in}	
	
	\caption{
		Accuracy and coverage under entropy thresholding across domains.
		Solid curves show accuracies on $\cC_\tau$ (black) and $\cC_\tau^{c}$ (red).
		Dashed curves show coverages of $\cC_\tau$ (orange) and $\cC_\tau^{c}$ (blue).
		The vertical dashed line marks the selected $\tau$ for a validation accuracy target of $95.25\%$. Across domains, decreasing \(\tau\) monotonically decreases core coverage while increasing core accuracy. Moreover, the core accuracy is consistently above the overall accuracy, whereas the out-of-core accuracy is consistently below it.
	}
	\vspace{0.1in}
\label{fig:th_sweep}
\end{figure}

\end{document}